\newtheorem{thm}{Theorem}[]
\newtheorem*{thm*}{Theorem}
\newtheorem{lemma}{Lemma}[]
\newtheorem*{lemma*}{Lemma}
\newcommand{\citet}[1]{\citeauthor{#1}~\shortcite{#1}}
\newcommand{\citep}{\cite}
\newcommand{\citealp}[1]{\citeauthor{#1}~\citeyear{#1}}
\newcommand{\algorithmfootnote}[2][\footnotesize]{%
  \let\old@algocf@finish\@algocf@finish
  \def\@algocf@finish{\old@algocf@finish
    \leavevmode\rlap{\begin{minipage}{\linewidth}
    #1#2
    \end{minipage}}%
  }%
}
\title{Natural Option Critic}
\author{
  Saket Tiwari \\
  College of Information and Computer Sciences \\
  University of Massachusetts Amherst \\
  Amherst, MA 01003 \\
  \texttt{sakettiwari@umass.edu} \\
  \And
  Philip S. Thomas \\
  College of Information and Computer Sciences \\
  University of Massachusetts Amherst \\
  Amherst, MA 01003 \\
  \texttt{pthomas@cs.umass.edu}
}
\begin{document}

\maketitle

\begin{abstract}
The recently proposed \textit{option-critic} architecture \citep{DBLP:journals/corr/BaconHP16} provides a stochastic policy gradient approach to hierarchical reinforcement learning. 
Specifically, it provides a way to estimate the gradient of the expected discounted return with respect to parameters that define a finite number of temporally extended actions, called \textit{options}. 
In this paper we show how the option-critic architecture can be extended to estimate the \textit{natural} gradient \citep{Amari:1998:NGW:287476.287477} of the expected discounted return. 
To this end, the central questions that we consider in this paper are: {\bf 1)} what is the definition of the natural gradient in this context, {\bf 2)} what is the Fisher information matrix associated with an option's parameterized policy, {\bf 3)} what is the Fisher information matrix associated with an option's parameterized termination function, and {\bf 4)} how can a \textit{compatible function approximation} approach be leveraged to obtain natural gradient estimates for both the parameterized policy and parameterized termination functions of an option with per-time-step time and space complexity linear in the total number of parameters. Based on answers to these questions we introduce the natural option critic algorithm. Experimental results showcase improvement over the \textit{vanilla gradient} approach.
\end{abstract}

\section{Introduction}
Hierarchical reinforcement learning methods enable agents to tackle challenging problems by identifying reusable \textit{skills}---temporally extended actions---that simplify the task. 
For example, a robot agent that tries to learn to play chess by reasoning solely at the level of how much current to give to its actuators every 20ms will struggle to correlate obtained rewards with their true underlying cause. 
However, if this same agent first learns skills to move its arm, grasp a chess piece, and move a chess piece, then the task of learning to play chess (leveraging these skills) becomes tractable. 
Several mathematical frameworks for hierarchical reinforcement learning have been proposed, including \textit{hierarchies of machines} \citep{parr1998reinforcement}, MAXQ \citep{dietterich2000hierarchical}, and the options framework \citep{Sutton1999BetweenMA}. 
However, none of these frameworks provides a practical mechanism for \textit{skill discovery}: determining what skills will be useful for an agent to learn.
Although skill discovery methods have been proposed, they tend to be \textit{heuristic} in that they find skills that have a property that intuitively might make for good skills for some problems, but which do not follow directly from the primary objective of optimizing the expected discounted return (\citealp{pmlr-v70-machado17a}; \citealp{Simsek2008SkillCB}; \citealp{NIPS1994_887}; \citealp{NIPS2009_3683}).

The \textit{option-critic} architecture \citep{DBLP:journals/corr/BaconHP16}, stands out from other attempts at developing a general framework for skill discovery in that it searches for the skills that directly optimize the expected discounted return. 
Specifically, the option critic uses the aforementioned options framework, wherein a skill is called an \textit{option}, and it proposes parameterizing all aspects of the option and then performing stochastic gradient descent on the expected discounted return with respect to these parameters. 
The key insight that enables the option-critic architecture is a set of theorems that give expressions for the gradient of the expected discounted return with respect to the different parameters of an option.

One limitation of the option critic is that it uses ordinary (stochastic) gradient descent. 
In this paper we show how the option critic can be extended to use \textit{natural gradient descent} \citep{Amari:1998:NGW:287476.287477}, which exploits the underlying structure of the option-parameter space to produce a more informed update direction. 
The primary contributions of this work are theoretical: we define the natural gradients associated with the option critic, derive the \textit{Fisher information matrices} associated with an option's parameterized policy and termination function, and show how the natural gradients can be estimated with per-time-step time and space complexity linear in the total number of parameters. This is achieved by means of \textit{compatible function approximations}. We also analyze the performance of natural gradient descent based approach on various learning tasks.

\section{Preliminaries and Notation}
A \textit{reinforcement learning} (RL) agent interacts with an environment, modeled as a \textit{Markov decision process} (MDP), over a sequence of time steps $t \in \mathbb N_{\geq 0}$. A finite MDP is a tuple $(\mathcal S, \mathcal A, P, R, d_0, \gamma)$. $\mathcal S$ is the finite set of possible states of the environment. $S_t$ is the state of the environment at time $t$. $\mathcal A$ is the finite set of possible actions the agent can take. $A_t$ is the action taken by the agent at time $t$. $P:\mathcal{S \times A \times S} \to [0,1]$ is the transition function: $P(s,a, s^{\prime}) = \Pr(S_{t+1}{=}s^{\prime}| S_t {=} s, A_t {=} a)$, for all $t$. Meaning, $P(s, a, s')$ the probability of transitioning to state $s'$ given the agent takes action $a$ in state $s$.  $R_t$ denotes the reward at time $t$.  $R$ is the \textit{reward function}, $R: \mathcal{S \times A} \to \mathbb R $, where $R(s,a) = \mathbb{E}[R_t| S_t{=}a, A_t{=}a]$, i.e., the expected reward the agent receives given it took action $a$ in state $s$. We say that a process has ended when the environment enters a \textit{terminal state}, meaning for a terminal state $s$, $P(s, a, s^{\prime}) = 0$ and $R(s,a) = 0$ for all $s^{\prime} \in \mathcal{S} \setminus \{s\}$ and $a \in \mathcal{A}$. The process ends after $T$ steps and we call $T$ the \textit{horizon}. We say the process is \textit{infinite horizon} when there does not exist a finite $T$. $d_0$ is the initial state distribution, i.e., $d_0(s)=\Pr(S_0{=}s)$. The parameter $\gamma  \in [0,1]$ scales how the rewards are discounted over time. When a terminal state is reached, time is reset to $t=0$ and consequently a new initial state is sampled using $d_0$.

A policy, $\pi: \mathcal{S \times A} \to [0,1]$, represents the agent's decision making system: $\pi(s,a) = \Pr(A_t{=}a| S_t{=}s)$. Given a policy, $\pi$, and an MDP, $(\mathcal S, \mathcal A, P, R, d_0, \gamma)$, an episode, $H$ is a sequence of states of the environment, actions taken by the agent, and the rewards observed from the initial state, $S_0$, to the terminal state, $S_T$, i.e., $H=(S_0, A_0, R_0, S_1, A_1, R_1, ..., S_T, A_T, R_T)$. We also define the path that an agent takes to be a sequence of states and actions, i.e., a history without rewards, $X = (S_0, A_0, S_1, A_1, ..., S_T, A_T)$. Path $X$ is a random variable from the set of all possible paths, $\mathcal{X}$. The return of an episode $H$ is the discounted sum of all rewards, $g(H) = \sum_{t=0}^T \gamma^t R_t$.  We call $v_{\pi}$ the value function for the policy $\pi$, $v_{\pi}: \mathcal{S} \to \mathbb R$, where $v_{\pi}(s) = \mathbb E[\sum_{t{=}0}^{T} \gamma^t R_t |, S_0{=}s, \pi]$. We call $q_{\pi}$ the action-value function associated with policy $\pi$, $q_{\pi}: \mathcal{S} \times \mathcal{A} \to \mathbb R$, where $q_{\pi}(s, a) = \mathbb E[\sum_{t{=}0}^{T} \gamma^t R_t | S_0{=}s, A_0{=}a, \pi]$.
\subsection{Policy Gradient Framework}
The \textit{policy gradient framework} (\citealp{Sutton:1999:PGM:3009657.3009806}; \citealp{kondat:ac}) assumes the policy $\pi$, parametrized by $\theta$, is differentiable. The objective function, $\rho$, is defined with respect to a start state $s_0$, $\rho(\theta) = \mathbb E[\sum_{t = 0}^{T} \gamma^t R_t | d_0, \theta]$. The agent learns by updating the parameters $\theta$ approximately proportional to the gradient $\partial \rho/\partial \theta$, i.e., $\theta \leftarrow \alpha \partial \rho/\partial \theta$ where $\alpha$ is the \textit{learning rate} (LR): a scalar hyper-parameter.
\subsection{Option Critic framework}
The \textit{options framework} \citep{Sutton1999BetweenMA} formalizes the notion of temporal abstractions by introducing options. An option, $o$, from a set of options, $\mathcal O$, is a generalization of primitive actions. The intra-option policy $\pi_{o}: \mathcal{S \times A} \to [0,1]$ represents the agent's decision making while executing an option $o$: $\pi_{o}(s,a) = \Pr(A_t{=}a| S_t{=}s, O_t{=} o)$. 
Like primitive actions the agent executes an option at a state $S_t$ and the option terminates at another $S_{t + \tau}$, where $\tau$ is the duration for which the agent is executing the option: $o_{t}$. 
While in the option $o$, from state $S_t$ to $S_{t + \tau}$, the agent follows the policy $\pi_{o}$. 
Option $o$ terminates stochastically in state $s$ according to a distribution $\beta$. 
The framework puts restrictions on where an option can be initiated by defining an initiation state set, $\mathcal I_{o}$, for option $o$. The option $o$ is initiated in state $ s \in \mathcal I_{o}$ based on $\pi_{\mathcal O}(s)$, which is a policy over options defined as $\pi_{\mathcal O}: \mathcal S \times \mathcal O \to [0,1]$.  An initiation state set $\mathcal I_{o}$, an intra-option policy $\pi_{o}$ and a termination function $\beta_{o}: \mathcal{S} \to [0,1]$ comprise an option $o$. It is commonly assumed that all options are available everywhere and thereby we dispense with the notion of an initiation set.

The \textit{option critic framework} makes all the options available everywhere, and introduces policy-gradient theorems within the options framework. The option active at time step $t$ is $O_t$. The intra-option policies ($\pi_{o}$) and termination functions ($\beta_{o}$) are represented using differentiable functions parametrized by $\theta$ and $\vartheta$, respectively. The goal is to optimize the expected discounted return starting at state $s_0$ and option $o_0$. We re-define the objective function, $\rho$, for the option critic setting: $\rho(\mathcal O, \theta, \vartheta, s, o) = \mathbb E [\sum_{t=0}^{\infty} \gamma^t R_t | \mathcal O, \theta, \vartheta, S_0= s, O_0=o]$.

 Equations similar to those in the policy gradient framework \citep{Sutton:1999:PGM:3009657.3009806} are manipulated to derive gradients of the objective with respect to $\theta$ and $\vartheta$ in the option-critic framework. 
 The analogous state value function is $v_{\pi_{\mathcal O}}: \mathcal{S} \to \mathbb{R}$, where $v_{\pi_{\mathcal O}}(s) = \mathbb E[\sum_t \gamma^t R_t | S_0{=}s]$. 
 $v_{\pi_{\mathcal O}}(s)$ is the value of a state $s$, within the options framework, with the option set $\mathcal O$ and the policy over options $\pi_{\mathcal O}$. 
 The option-value function is $q_{\pi_{\mathcal O}}: \mathcal{S} \times \mathcal O \to \mathbb R$, where $q_{\pi_{\mathcal O}}(s, o) = \mathbb E[\sum_t \gamma^t R_t | S_0{=}s, O_0{=}o]$. Here, $q_{\pi_{\mathcal O}}(s, o)$ is the value of state $s$ when option $o$ is active with the option set $\mathcal O$. The state-option-action value function is $q_U: \mathcal{S} \times \mathcal O \times \mathcal{A} \to \mathbb{R}$, where $q_{U}(s, o, a) = \mathbb E[\sum_t \gamma^t R_t | S_0{=}s, O_0{=}o, A_0{=}a]$. Here, $q_{U}(s, o, a)$ is the value of executing action $a$ in the context of state-option pair $(s, o)$. The option-value function \textit{upon arrival} is $u:\mathcal O \times \mathcal{S} \to \mathbb{R}$, where $u(o, s^{\prime}) = \mathbb E[\sum_t \gamma^t R_t|S_1{=}s^{\prime}, O_0{=} o]$. Here, $u(o, s^{\prime})$ is the value of option $o$ being active upon the agent entering state $s'$. \citet{DBLP:journals/corr/BaconHP16} observe a consequence of the definitions:
\begin{equation}
    \label{eq:onarrival}
    u(o, s^{\prime}) = (1 - \beta_{o}(s^{\prime}))q_{\pi_{\mathcal O}}(s^{\prime}, o) + \beta_{o}(s^{\prime})v_{\pi_{\mathcal O}}(s^{\prime}).
\end{equation}
The main results presented by \citet{DBLP:journals/corr/BaconHP16} are the \textit{intra-option policy gradient theorem} and the \textit{termination gradient theorem}. The gradient of the expected discounted return with respect to $\theta$ and initial condition $(s_0, o_0)$ is:
\begin{equation}
    \frac{\partial q_{\pi_{\mathcal O}}(s_0, o_0)}{\partial \theta} = \sum_{s, o} \mu_{\mathcal O}(s, o) \sum_{a} \frac{\partial \pi_{o}(s, a, \theta)}{\partial \theta} q_U (s, o, a),
\end{equation}
where $\mu_{\mathcal O}(s, o)$ is the discounted weighting of state-option pair $(s, o)$ along trajectories starting from $(s_0, o_0)$ defined by $:\mu_{\mathcal O}(s, o) = \sum_{t = 0}^{\infty} \gamma^t \Pr(S_t{=}s, O_t{=}o | s_0, o_0)$. The gradient of the expected discounted return with respect to $\vartheta$ and initial condition $(s_1, o_0)$ is:
\begin{equation}
    \frac{\partial u(o_0, s_1)}{\partial \vartheta} = - \sum_{o, s^{\prime}} \mu_{\mathcal O}(s^{\prime}, o) \frac{\partial \beta_{o}(s^{\prime}, \vartheta) }{\partial \vartheta } a_{\mathcal O}(s^{\prime}, o),
\end{equation}
where $a_{\mathcal O}: \mathcal{S} \times \mathcal O \to \mathbb R$ is the advantage function over options such that $a_{\mathcal O}(s^{\prime}, o) = q_{\pi_{\mathcal O}}(s^{\prime}, o) - v_{\pi_{\mathcal O}}(s^{\prime})$. Here, $\mu_{\mathcal O}(s^{\prime}, o)$ is the discounted weighting of state option pair $(s', o)$ from $(s_1, o_0)$, i.e., according to a Markov chain shifted by one time step, defined by$:\mu_{\mathcal O}(s^{\prime}, o) = \sum_{t = 0}^{\infty} \gamma^t \Pr(S_{t+1}{=}s', O_t{=}o | s_1, o_0)$. The agent learns by updating parameters $\theta$ and $\vartheta$ in the direction approximately proportional to $\partial q_{\mathcal O}(s_0, o_0)/\partial \theta$ and $\partial u(o_0, s_1)/\partial \vartheta$, respectively. Meaning, it learns by updating $\theta \leftarrow \alpha_{\theta} \partial q_{\pi_{\mathcal O}}(s_0, o_0)/\partial \theta$ and $\vartheta \leftarrow \alpha_{\vartheta}\partial u(o_0, s_1)/\partial \vartheta $, where $\alpha_{\theta}$ and $\alpha_{\vartheta}$ are the learning rates for $\theta$ and $\vartheta$, respectively.
\subsection{Natural Actor Critic}
Natural gradient descent \citep{Amari:1998:NGW:287476.287477} exploits the underlying structure of the parameter space when defining the direction of steepest descent. It does so by defining the inner product $\langle \textbf{x, y} \rangle_{\theta}$ in the parameter space as: 
\begin{equation}
    \langle \textbf{x, y} \rangle_{\theta} = \textbf{x}^T G_{\theta} \textbf{y}, \label{eq:rspacedist}
\end{equation}
where $G_{\theta}$ is called the \textit{metric tensor}. 
Although the choice of $G_{\theta}$ remains open under certain conditions \citep{pmlr-v48-thomasb16} we choose the Fisher information matrix, as is common practice. The fisher information matrix distribution over random variable $X$, parametrized by policy parameters $\theta$, that lie on a Reimannian manifold
 (\citealp{raofim}; \citealp{amarifim}):
\begin{equation}
    (G_{\theta})_{i, j} = \mathbb E\left[ \frac{\partial \ln \Pr(X; \theta)}{\partial \theta_i} \frac{\partial \ln \Pr(X; \theta)}{\partial \theta_j}\right] \label{eq:fim_def},
\end{equation}
where the expectation is over the distribution $\Pr(X)$ and $(G_{\theta})_{i,j}$ represents a matrix with its $i,j^{th}$ element being the expression as defined on the right hand side --- we use this notation to represent a matrix throughout the paper. \citet{Kakade:2001} makes the assumption that every policy, $\pi$, is ergodic and irreducible, therefore it has a well-defined \textit{stationary distribution} for each state $s$. Under this assumption, \citet{Kakade:2001} introduces the use of natural gradient for optimizing the expected reward over the parameters $\theta$ of policy $\pi$, as defined by $\rho(\theta) = \sum_{s, a} d^{\pi}(s) \pi(s, a, \theta) R(s, a)$. The \textit{natural gradient} for the objective function, $\rho$, is defined as:
\begin{equation} \widetilde{\nabla} \rho(\theta)  = G_{\theta}^{-1} \frac{\partial  \rho(\theta) }{\partial \theta} \label{eq:natgrad_kakade}.\end{equation}
The derivation of a closed form expression for $G_{\theta}$ for the parameter space of policy $\pi$, parametrized by $\theta$, is non-trivial as demonstrated for the limiting matrix of the infinite horizon problem in reinforcement learning \citep{bagnell2003covariant}. For a weight vector $w$ let $\hat{q}_w$ be an approximation of the state action value function $q(s,a)$, which has the form:
\[\hat{q}_w (s,a) = w^T \frac{\partial \ln \pi(s, a, \theta)}{\partial \theta}.\]
The mean squared error $\epsilon(w, \theta)$, for a weight vector $w$ and a given policy parametrized by $\theta$, is defined as:
\[\epsilon(w, \theta) = \sum_{s,a} d^{\pi}(s) \pi(s, a, \theta) (\hat{q}_w (s,a) - q_{\pi}(s,a))^2,\]
where $d^{\pi}(s) = \sum_{t =0}^{\infty} \gamma^t \Pr(S_t{=}s | \pi)$ is the discounted weighting of state $s$ in the infinite horizon problem. The weights $d^{\pi}(s)$ normalize to the stationary distribution for state $s$ under policy $\pi$ in the undiscounted setting where the MDP terminates at every time step $t$ with probability $1 - \gamma$. Theorem 1 as introduced by \citet{Kakade:2001} states that $\tilde{w}$ which minimizes the mean squared error, $\epsilon(w, \theta)$, is equal to the natural gradient as defined in \eqref{eq:natgrad_kakade}.

\citet{Kakade:2001} also demonstrates how natural policy gradient performs under the re-scaling of parameters. In addition to that, \citet{Kakade:2001} demonstrates how the natural gradient weights the components of $\widetilde{\nabla} \rho(\theta)$ uniformly, instead of using $d^{\pi}(s)$. We also point out that the natural gradient is independent to local re-parametrization of the model \citep{Pascanu2013RevisitingNG} and can be used in \textit{online learning} \citep{Degris_model-freereinforcement}.  
Natural gradients for reinforcement learning (\citealp{Peters2008,Bhatnagar2009}; \citealp{Degris_model-freereinforcement}), as well as more recent work in deep neural networks
(\citealp{desjardins2015natural}; \citealp{Pascanu2013RevisitingNG}; \citealp{thomas2017decoupling}; \citealp{Sun2017RelativeFI}) have shown to be effective in learning.

The Option-Critic architecture uses vanilla gradient to learn temporal abstraction and internal policies, which can be less data efficient compared to the natural gradient \citep{Amari:1998:NGW:287476.287477}. The natural gradient also overcomes the difficulty posed by the \textit{plateau phenomena} \citep{amariinfogeo}. We derive the metric tensors for the parameters in the option-critic architecture. 
Computing the complete Fisher information matrix or is expensive. 
We use a block-diagonal estimate of the Fisher information matrix as has been applied in the past to reinforcement learning \citep{NIPS2011_4449} and to neural networks (\citealp{NIPS2007_3234}; \citealp{Kurita1992IterativeWL}; \citealp{Martens2010DLV31043223104416}; \citealp{Pascanu2013RevisitingNG}; \citealp{Martens2015OptimizingNN}). Specifically, we estimate $G_{\theta}$ and $G_{\vartheta}$ separately, where $\theta$ and $\vartheta$ are the parameters of of the intra-option policy and the option termination function. These are then combined into a $(|\theta| + |\vartheta|) \times (|\theta| + |\vartheta|)$ sized estimate of the complete Fisher information matrix of the parameter space, where $|\theta|, |\vartheta|$ represent the size of vectors.

We also provide theoretical justification for the resulting algorithm inspired from the incremental natural actor critic algorithm \citep{Bhatnagar:2007:INA:2981562.2981576} (INAC) and its extension to include eligibility traces (\citealp{morimuranatgrad}; \citealp{Thomas2014BiasIN}).
\section{Start State Fisher Information Matrix Over Intra-Option Path Manifold}
We define path $X$ in the options framework for the infinite horizon problem as the sequence  of state-option-action tuples: $X = (S_0, O_0, A_0, S_1, O_1, A_1, ...)$. We use $\mathcal{X}$ to denote the set of all paths. We introduce the function $g:\mathcal X \to \mathbb R$ called the \textit{expected return over path}, where $g(x) = \mathbb E[\sum_{t=0}^T \gamma^t R_t | x]$ is the expected return given the path $x$. The goal in a reinforcement learning problem, in the context of the option-critic architecture, is to maximize the discounted return, $\rho(\mathcal O, \theta, \vartheta, s_0, o_0)$. The goal can be re-written as maximizing $J(\theta, s_0, o_0) = \sum \Pr(x; \theta) g(x)$. Where the summation is over all $x \in \mathcal{X}$ starting from $(s_0, o_0)$ and the intra-option policies are parametrized by $\theta$. To optimize the objective $J$, we define it over a Riemannian space $\Theta$, with $\theta \in \Theta$. In the Riemannian space the inner product is defined as in $\eqref{eq:rspacedist}$. The direction of steepest ascent of $J(\theta)$ in the Riemannian space, $\Theta$, is given by $G^{-1}_{\theta} \partial J(\theta) / \partial \theta$ \citep{Amari:1998:NGW:287476.287477}, (see equation \eqref{eq:natgrad_kakade}).

In this section we use $\partial_i$ to denote $\partial/\partial \theta_i$ and use $\langle f(X) \rangle_{\Pr(X)}$ to indicate the expected value of $f$ with respect to distribution $\Pr(X)$.
 We obtain an alternative form of the Fisher information matrix which is a well know result \citep{degroot2004optimal} (for details see appendix):
\begin{align}
\left (G_{\theta} \right )_{i,j} &= - \langle \partial_i \partial_j \ln \Pr(X; \theta) \rangle_{\Pr(X; \theta)} \label{eq:fimexpec}.
\end{align}
\subsection{Fisher Information Matrix Over Intra-Option Path Manifold}
In Theorem \ref{thm:1} we show that the Fisher information matrix over the paths, $X$, truncated to terminate at time step $\mathcal T$ converges as $\mathcal T \to \infty$ to the Fisher information matrix over the intra-option policies, $\pi_{o}$. This gives an expression for Fisher information matrix over the set of paths, $\mathcal X$, and simplifies computation of the natural gradient when maximizing the objective $J(\theta, s_0, o_0)$. We use $G^{\mathcal T}_{\theta}$ to indicate the ${\mathcal T}$-step finite horizon Fisher information matrix, meaning the Fisher information matrix if the problem were to be reduced to terminate at step $\mathcal T$.  We normalize the metric by the total length of path $\mathcal T$ \citep{bagnell2003covariant} to get a convergent metric. 
\begin{thm}[Infinite Horizon Intra-Option Matrix]\label{thm:1}
Let $G^{\mathcal T}_{\theta}$ be the $\mathcal T$-step finite horizon Fisher information matrix and $\langle G_{\theta} \rangle_{\mu_{\mathcal O}(s, o)}$ be the Fisher information matrix of intra-option policies under a stationary distribution of states, actions and options: $\pi_{o}(s,a,\theta) \mu_{\mathcal O}(s, o)$. Then:
\[\lim_{\mathcal T \to \infty} \frac{1}{\mathcal T} G^{\mathcal T}_{\theta} = \langle G_{\theta} \rangle_{\mu_{\mathcal O}(s, o)}.\]
\end{thm}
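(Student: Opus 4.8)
The plan is to evaluate $G^{\mathcal T}_\theta$ in closed form by exploiting the product structure of the path distribution, and then to take the length-normalized limit by an ergodic (Cesàro) argument on the induced Markov chain over state--option pairs. For a $\mathcal T$-step path $X^{\mathcal T}=(S_0,O_0,A_0,\dots,S_{\mathcal T},O_{\mathcal T},A_{\mathcal T})$ starting from $(s_0,o_0)$, the probability $\Pr(X^{\mathcal T};\theta)$ factors as a product of an initial factor fixing $S_0,O_0$, the intra-option action factors $\pi_{O_t}(S_t,A_t,\theta)$, the transition factors $P(S_t,A_t,S_{t+1})$, and the option-switching factors, which depend only on the termination functions (through $\vartheta$) and on $\pi_{\mathcal O}$. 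Taking $\ln$ and then $\partial_i\partial_j$, every factor except the intra-option ones is $\theta$-independent and vanishes, so $\partial_i\partial_j\ln\Pr(X^{\mathcal T};\theta)=\sum_{t=0}^{\mathcal T}\partial_i\partial_j\ln\pi_{O_t}(S_t,A_t,\theta)$. Substituting this into the alternative form of the Fisher information matrix \eqref{eq:fimexpec}, and using that the $t$-th summand depends on $X^{\mathcal T}$ only through $(S_t,O_t,A_t)$ --- whose marginal, for $t\le\mathcal T$, is independent of $\mathcal T$ --- I would marginalize the expectation to get
\[
(G^{\mathcal T}_\theta)_{i,j}=\sum_{t=0}^{\mathcal T}\sum_{s,o}\Pr(S_t{=}s,O_t{=}o\mid s_0,o_0)\,B_{i,j}(s,o),\qquad B_{i,j}(s,o):=-\sum_a \pi_o(s,a,\theta)\,\partial_i\partial_j\ln\pi_o(s,a,\theta),
\]
where, by \eqref{eq:fim_def}--\eqref{eq:fimexpec} applied one step at a time, $\big(B_{i,j}(s,o)\big)_{i,j}$ is precisely the single-step Fisher information matrix of the intra-option policy $\pi_o(s,\cdot,\theta)$.

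Next I would divide by $\mathcal T$ and move the finite sum over $(s,o)\in\mathcal S\times\mathcal O$ outside, leaving, for each $(s,o)$, the Cesàro average $\tfrac1{\mathcal T}\sum_{t=0}^{\mathcal T}\Pr(S_t{=}s,O_t{=}o\mid s_0,o_0)$ multiplying the fixed number $B_{i,j}(s,o)$. Under the assumption --- inherited, in the options setting, from the ergodicity assumption on $\pi$ of \citet{Kakade:2001} --- that the Markov chain induced over state--option pairs is ergodic, this Cesàro average converges to the chain's stationary distribution; in the geometrically terminating reinterpretation of $\gamma$-discounting (the same one under which $d^{\pi}$ is a stationary distribution, as recalled in the preliminaries), that stationary distribution is proportional to $\mu_{\mathcal O}(s,o)=\sum_{t\ge 0}\gamma^t\Pr(S_t{=}s,O_t{=}o\mid s_0,o_0)$. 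Since $\mathcal S,\mathcal A,\mathcal O$ are finite and each $B_{i,j}(s,o)$ is a fixed finite number (using differentiability of $\pi_o$ and that it is bounded away from $0$), the limit passes through the finite sum and
\[
\lim_{\mathcal T\to\infty}\frac1{\mathcal T}(G^{\mathcal T}_\theta)_{i,j}=\sum_{s,o}\mu_{\mathcal O}(s,o)\,B_{i,j}(s,o)=\big(\langle G_\theta\rangle_{\mu_{\mathcal O}(s,o)}\big)_{i,j},
\]
which is the claimed identity; the last equality is just the definition of the right-hand side, once more via the equivalence of the two forms of the Fisher information matrix.

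The delicate step is the passage to the stationary distribution. Two points need care: (i) the Cesàro convergence itself, which rests on the ergodicity/irreducibility of the state--option chain together with finiteness of the state, action, and option sets; and (ii) identifying the limiting stationary distribution with $\mu_{\mathcal O}$ up to the path-length normalization of \citet{bagnell2003covariant}, i.e. lining up the discount bookkeeping and the restart interpretation with the definition $\mu_{\mathcal O}(s,o)=\sum_{t\ge0}\gamma^t\Pr(S_t{=}s,O_t{=}o\mid s_0,o_0)$. By contrast, the decomposition step (only the intra-option policies carry $\theta$-dependence) and the boundedness needed to interchange limit and sum are routine given the differentiability and finiteness assumptions already in force.
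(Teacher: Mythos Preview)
Your proposal is correct and follows essentially the same route as the paper: start from the alternate form \eqref{eq:fimexpec}, use the Markovian factorization of $\Pr(X^{\mathcal T};\theta)$ together with the fact that only the intra-option factors $\pi_{o}(s,a,\theta)$ carry $\theta$-dependence, and then pass to the stationary distribution $\mu_{\mathcal O}(s,o)$ via an ergodic/Ces\`aro argument. The only cosmetic difference is that the paper groups the path into pairwise transitions $\Pr(S_t,O_t\mid S_{t-1},O_{t-1};\theta)$ and cancels the $\theta$-independent factors explicitly through the quotient rule, whereas you drop them at the outset by observing $\theta$-independence of the transition and option-switch factors; the two presentations are equivalent.
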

\begin{proof}
    See the appendix (supplementary materials).
\end{proof}
\subsection{Compatible Function Approximation For Intra-Option Path Manifold}
We subtract the option-state value function, $q_{\pi_{\mathcal O}}$, from the state-option-action value function, $q_U$, and treat it as a baseline to reduce variance in the gradient estimate of the expected discounted return. The baseline can be a function of both state and action in special circumstances, but none of those apply here \citep{DBLP:journals/corr/ThomasB17}. So, we  define the \textit{state-option-action advantage function} $a_U: \mathcal{S} \times \mathcal O \times \mathcal{A} \to \mathbb R$. Where $a_{U}(s, o, a) = q_{U}(s, o, a) - q_{\mathcal O}(s, o)$ is the advantage of the agent taking action $a$ in state $s$ in the context of option $o$. Here, $a_U$ is approximated by some compatible function approximator $f^{\pi_{o}}_{\eta}$. For vector $\eta$ and parameters $\theta$ we define:
\begin{equation}
    f_{\eta}^{\pi_{o}}(s,a) = \eta^T \left(\frac{\partial \ln (\pi_{o}(s, a, \theta))}{\partial \theta}\right). \label{eq:policy_func_approx}
\end{equation}
The $\tilde{\eta}$ that is a local minima of the squared error $\epsilon(\eta, \theta)$:
\[\epsilon(\eta, \theta) = \sum_{s, o, a} \mu_{\mathcal O} (s, o) \pi_{o}(s, a, \theta) (f_{\eta}^{\pi_{o}}(s,a) - a_{U}(s, o, a))^2.\]
is equal to the natural gradient of the objective, $\rho$, with respect to $\vartheta$  (the complete derivation is in the appendix):
\[ \widetilde{\nabla}_{\theta} q_{\pi_{\mathcal O}}(s_0, o_0) = G_{\theta}^{-1} \frac{\partial q_{\pi_{\mathcal O}}(s_0, o_0)}{\partial \theta} = \tilde{\eta}.\]
Thus, for a \textit{sensible} \citep{Kakade:2001} function approximation, as in \eqref{eq:policy_func_approx}, in the option-critic framework the natural gradient of the expected discounted return is the weights of linear function approximation.
\section{Start State Fisher Information Matrix Over State-Option Transition Path Manifold}
We derive the Fisher information matrix for the parameters $\vartheta$ over the state-option transitions path manifold. We define $X^\prime$ as a path for state-option transitions in the option-critic architecture. More specifically, we define $X^\prime = (O_0, S_1, O_1, S_2, O_2, S_3, ...)$ to be path tuples of state option pairs shifted by one time step. We define $\mathcal{X'}$ to be the set of all state-option transition paths. Similar to the previous section, we define the \textit{expected return over state-option transitions} $g':\mathcal{X'} \to \mathbb R$, where $g'(x') = \mathbb E[\sum_{t=0}^T \gamma^t R_t | x']$ is the expected return given state-option transitions path $x'$. The goal can be re-written to maximize $J'(\vartheta, s_1, o_0) = \sum \Pr(x') g'(x')$. Where the summation is over all $x' \in \mathcal{X'}$ starting from $(s_1, o_0)$ and terminations are parametrized by $\vartheta$. To optimize $J'$ we define it over a Reimannian space $\Theta'$ with $\vartheta \in \Theta'$ and the inner product defined as in \eqref{eq:rspacedist}, similar to previous section. The direction of steepest ascent in the Reimannian space, $\Theta'$, is the natural gradient.

In this section, we use $\partial_i $ to denote $\partial/\partial \vartheta_i$ and use $\langle f(X') \rangle_{\Pr(X')}$ to indicate the expected value of $f(X')$ with respect to the distribution $\Pr(X')$. Equation \eqref{eq:fimexpec} implies that the Fisher information matrix can be written as:
\[ \left (G_{\vartheta}\right)_{i,j} = - \langle \partial_i \partial_j \ln \Pr(X^\prime; \vartheta) \rangle_{\Pr(X^\prime; \vartheta)}.\]
\subsection{Fisher Information Matrix Over State-Option Transition Path Manifold}
In Theorem \ref{thm:2} we show that the Fisher information matrix over the paths, $X'$, truncated to terminate at time step $\mathcal T$ converges as $\mathcal T \to \infty$ to an expression in terms of the terminations and the policy over options over the stationary distribution of states and options. This gives an expression for Fisher information Matrix over set of paths, $\mathcal X'$, and simplifies computation of the natural gradient when maximizing the objective $J'(\vartheta, s_1, o_0)$.
\begin{thm}[Infinite Horizon State-Option Transition Matrix] Let $G^{\mathcal T}_{\vartheta}$ be the $\mathcal T$-step finite horizon Fisher information matrix and $\mu_{\mathcal O}(s', o)$ is the stationary distribution of state-option pairs $s', o$. Then:\label{thm:2}
\begin{align*}
    &\Big (\lim_{\mathcal T \to \infty} \frac{1}{\mathcal T} G^{\mathcal T}_{\vartheta}\Big)_{i,j} = - \langle  \partial_i \ln \beta_{o}(s^\prime, \vartheta) \\
    &\partial_j  \ln (1 - \beta_{o}(s^{\prime}, \vartheta) + \beta_{o}(s^{\prime}, \vartheta) \pi_{\mathcal O}(s^{\prime}, o)) \rangle_{\mu_{\mathcal O}(s', o)} .
\end{align*}
\end{thm}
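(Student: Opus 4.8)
The plan is to follow the template of Theorem~\ref{thm:1}: factor the likelihood of a truncated path, isolate the only factors that depend on $\vartheta$, apply the Hessian form of the Fisher information matrix, and pass to a Ces\`aro limit. Conditioned on the start pair $(s_1,o_0)$, a path $X'$ truncated at horizon $\mathcal T$ has probability $\Pr(X';\vartheta)=\prod_{t}\Pr(S_{t+1}\mid S_t,O_t)\,\Pr(O_{t+1}\mid S_{t+1},O_t)$ over the truncation horizon. Since $\Pr(S_{t+1}\mid S_t,O_t)=\sum_a\pi_{O_t}(S_t,a)P(S_t,a,S_{t+1})$ and $\pi_{\mathcal O}$ do not depend on $\vartheta$, only the option-transition factors survive differentiation. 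Writing $\xi_o(s')\coloneqq 1-\beta_o(s',\vartheta)+\beta_o(s',\vartheta)\pi_{\mathcal O}(s',o)$, the call-and-return dynamics give $\Pr(O_{t+1}{=}o\mid S_{t+1}{=}s',O_t{=}o)=\xi_o(s')$ and $\Pr(O_{t+1}{=}o'\mid S_{t+1}{=}s',O_t{=}o)=\beta_o(s',\vartheta)\pi_{\mathcal O}(s',o')$ for $o'\neq o$.

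Next I would substitute into the Hessian form \eqref{eq:fimexpec}. Because $\ln\Pr(X';\vartheta)$ is a \emph{sum} of per-step log-factors, $\partial_i\partial_j\ln\Pr(X';\vartheta)$ is the corresponding sum and no cross-time products arise, so $(G^{\mathcal T}_\vartheta)_{i,j}=-\sum_{t}\big\langle\partial_i\partial_j\ln\Pr(O_{t+1}\mid S_{t+1},O_t)\big\rangle$. Conditioning on $(S_{t+1},O_t)=(s',o)$, splitting on the event $O_{t+1}=o$, and using $\partial_i\partial_j\ln\pi_{\mathcal O}=0$, the conditional expectation equals $\xi_o(s')\,\partial_i\partial_j\ln\xi_o(s')+\big(1-\xi_o(s')\big)\,\partial_i\partial_j\ln\beta_o(s',\vartheta)$, since the total mass of the $o'\neq o$ branch is $\beta_o(s')(1-\pi_{\mathcal O}(s',o))=1-\xi_o(s')$ and each such branch has log-Hessian $\partial_i\partial_j\ln\beta_o(s')$.

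The core algebraic step is to collapse this two-term expression. From $\partial_i\xi_o=-(1-\pi_{\mathcal O}(s',o))\,\partial_i\beta_o$ one gets $\partial_i\ln\xi_o=-\tfrac{1-\xi_o}{\xi_o}\,\partial_i\ln\beta_o$, and combining this with the identity $x\,\partial_i\partial_j\ln x=\partial_i\partial_j x-x\,\partial_i\ln x\,\partial_j\ln x$ (applied to $x=\xi_o$ and to $x=\beta_o$) the $\partial_i\partial_j\beta_o$ pieces cancel, leaving $-\tfrac{1-\xi_o(s')}{\xi_o(s')}\,\partial_i\ln\beta_o(s',\vartheta)\,\partial_j\ln\beta_o(s',\vartheta)$, which by the same relation equals $\partial_i\ln\beta_o(s',\vartheta)\,\partial_j\ln\xi_o(s')$ and is manifestly symmetric in $i,j$. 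Hence $(G^{\mathcal T}_\vartheta)_{i,j}=-\sum_{t}\sum_{s',o}\Pr(S_{t+1}{=}s',O_t{=}o\mid s_1,o_0)\,\partial_i\ln\beta_o(s',\vartheta)\,\partial_j\ln\xi_o(s')$. Dividing by $\mathcal T$ and letting $\mathcal T\to\infty$, the ergodicity/irreducibility assumption applied to the one-step-shifted Markov chain on state--option pairs gives $\tfrac1{\mathcal T}\sum_{t}\Pr(S_{t+1}{=}s',O_t{=}o\mid s_1,o_0)\to\mu_{\mathcal O}(s',o)$, and since $\mathcal S,\mathcal A,\mathcal O$ are finite the sum and limit may be exchanged, yielding $-\langle\partial_i\ln\beta_o(s',\vartheta)\,\partial_j\ln\xi_o(s')\rangle_{\mu_{\mathcal O}(s',o)}$, which is exactly the claimed expression.

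I expect the middle (collapsing) step to be the main obstacle: it is the only part that is not bookkeeping, and it hinges on the precise interplay between $\partial_i\ln\xi_o$ and $\partial_i\ln\beta_o$ and on the cancellation of the second-derivative terms, so one must check that the $(1-\pi_{\mathcal O})$ factors enter consistently everywhere. A lesser concern is regularity in the limit step: one needs $\beta_o$ (and hence $\xi_o$) bounded away from $0$ and $1$, e.g.\ via a sigmoid parameterization, for the log-derivatives to be well defined and for the Ces\`aro limit to exist.
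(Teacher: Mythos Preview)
Your proposal is correct and follows essentially the same route as the paper: both use the Hessian form \eqref{eq:fimexpec}, the Markov factorization of $\Pr(X';\vartheta)$, the split into the continue ($O_{t+1}=o$) versus switch ($O_{t+1}\neq o$) branches, the cancellation of the $\partial_i\partial_j\beta_o$ pieces, and the ergodic limit. The only cosmetic difference is the order of operations---the paper first applies the quotient rule $\partial_i\partial_j\ln p=\tfrac{\partial_i\partial_j p}{p}-\tfrac{\partial_i p\,\partial_j p}{p^2}$ and then shows the first summand vanishes under $\mu_{\mathcal O}$ while simplifying the second, whereas your ``collapsing step'' carries out exactly the same cancellation after first averaging over $O_{t+1}$ and writing the result as $\xi_o\,\partial_i\partial_j\ln\xi_o+(1-\xi_o)\,\partial_i\partial_j\ln\beta_o$.
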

\begin{proof}
See appendix (supplementary materials).
\end{proof}
\subsection{Compatible Function Approximation For State-Option Transition Path Manifold}
 We define the \textit{advantage function of continued option} as: $a'_{\mathcal O}: \mathcal{S} \times \mathcal O \to \mathbb R $. Where $a'_{\mathcal O}(s^{\prime}, o) = u(o, s^{\prime}) - q_{\pi_{\mathcal O}}(s^{\prime}, o)$ is the advantage of the option $o$ being active while exiting $s'$ given that option $o$ is active when the agent enters $s^{\prime}$. We consider terminations improvement when $a'_{\mathcal O}$ is approximated by some compatible function approximator $h^{\beta_{o}}_{\varphi}$. For vector $\varphi$ and parameters $\vartheta$ we define:
\begin{equation}
h^{\beta_{o}}_{\varphi}(s^{\prime})= \varphi^T  \frac{\partial \ln(1 - \beta_{o}(s^{\prime}, \vartheta) + \pi_{\mathcal O}(s^{\prime}, o)\beta_{o}(s^{\prime}, \vartheta)))}{\partial \vartheta}. \label{eq:func_approx_afco}
\end{equation}
 We define the squared error $\epsilon(\varphi, \vartheta)$ associated with vector $\varphi$ as:
\begin{align*}
\epsilon(\varphi, \vartheta) = \sum_{s^{\prime}, o} &\mu_{\mathcal O}(s^{\prime}, o)L(O_{t + 1} {=} o | O_t {=} o, S_{t + 1} {=} s^{\prime}; \vartheta)\\ &(h^{\beta_{o}}_{\varphi}(s^{\prime}) - a'_{\mathcal O}(s^{\prime}, o))^2,
\end{align*}
where $L(O_{t + 1} {=} o | O_t {=} o, S_{t + 1} = s^{\prime}; \vartheta)$ is the likelihood ratio of option $o$ being active while exiting $s'$ given that option $o$ is active when the agent enters $s^{\prime}$. It is defined as follows:
\begin{align*}
  L(O_{t + 1} {=} o  | O_t {=} o, S_{t + 1} &=s^{\prime}; \vartheta) \\  =&\frac{\Pr(O_{t + 1} {=} o | O_t {=} o, S_{t + 1} = s^{\prime}; \vartheta)}{\Pr(O_{t + 1} {\neq} o | O_t {=} o, S_{t + 1} = s^{\prime}; \vartheta)} \\
  =& \frac{\beta_{o}'(s', \vartheta)}{1 - \beta_{o}'(s', \vartheta)}.
\end{align*}
We assume, throughout the paper, that the denominator is not $0$. The $\tilde{\varphi}$ that is a local minima of $\epsilon(\varphi)$ satisfies (the complete derivation is in the appendix):
\[ \widetilde{\nabla}_{\vartheta} u(o_0, s_1) = G_{\vartheta}^{-1} \frac{\partial u(o_0, s_1)}{\partial \vartheta} = - \tilde{\varphi}.\]
Therefore, for an approximation of the continued state-option value function, as in \eqref{eq:func_approx_afco}, the natural gradient of the expected discounted return is the negative weights of the linear function approximation.
\section{Incremental Natural Option Critic Algorithm}
We introduce algorithms inspired from the incremental natural actor critic introduced by  \citet{Degris_model-freereinforcement}, who in turn built on the theoretical work of \citet{Bhatnagar:2007:INA:2981562.2981576}. The algorithm learns the parameters for approximations of state-option-action advantage function, $a_U$,  and the advantage function of continued option, $a_{\mathcal O}'$, incrementally by taking steps in the direction of reducing the error $\epsilon(\eta, 
\theta)$ and $\epsilon(\varphi, 
\vartheta)$. It does stochastic gradient descent using the gradients $\partial \epsilon(\eta, \vartheta)/\partial \eta$ and $\partial \epsilon(\varphi, \vartheta)/ \partial \varphi$. Learning the parameters $\eta$ and $\varphi$ leads to natural gradient based updates for $\theta$ and $\vartheta$. We introduce hyper parameters $\alpha_{\eta}, \alpha_{\varphi}$ and $\lambda$, which are the learning rate for $\eta$, the learning rate for $\varphi$ and the $\lambda$ the eligibility trace parameter of both $\eta$ and $\varphi$, respectively.  The algorithm learns the policy over options, $\pi_{\mathcal O}$, using intra-option Q-learning \citep{Sutton1999BetweenMA} as in previous work \citep{DBLP:journals/corr/BaconHP16}.

The algorithm uses \textit{TD-error} style updates to learn $\theta$ and $\vartheta$. Analogous to the consistent estimates used by \citet{Bhatnagar:2007:INA:2981562.2981576}, we state that a \textit{consistent estimate} of the state-option value function, $\hat{q}_{\pi_{\mathcal O}}$, satisfies $\mathbb E[\hat{q}_{\pi_{\mathcal O}}(s_t, o_t) | s_t, o_t, \pi_{\mathcal O}, \pi_{o_t}, \beta_{o_t} ] = q_{\pi_{\mathcal O}}$. Similarly, a consistent estimate of the value function upon arrival, $\hat{u}$, satisfies $\mathbb E[\hat{u}(o_t, s_{t + 1}) | o_t, s_{t+1}, \pi_{\mathcal O}, \pi_{o_t}, \beta_{o_t}] = u(o_t, s_{t + 1})$.  We define the TD-error for the intra-option policies at time step $t$ to be $\delta^U_t = r_t + \gamma \hat{u}(o_t, s_{t+1}) - \hat{q}_{\pi_{\mathcal O}}(s_{t}, o_t)$.

A consistent estimate of the state value function, $\hat{v}_{\pi_{\mathcal O}}$, satisfies $\mathbb E[\hat{v}_{\pi_{\mathcal O}}(s_{t})| s_t, \pi_{\mathcal O}, \pi_{o_t}, \beta_{o_t}] = v_{\pi_{\mathcal O}}(s_{t})$. We define the TD-error at time step $t$ for the terminations to be $\delta_t^{\mathcal O} = r_t + \gamma \hat{v}_{\pi_{\mathcal O}}(s_{t+1}) - \hat{v}_{\pi_{\mathcal O}}(s_{t})$. We provide Lemmas 1 and 2 to show that $\delta^U_t$ and $\delta^{\mathcal O}_t$ are consistent estimates of $a_{U}$ and $a_{\mathcal{O}}$.
\begin{lemma}
Given intra-option policies, $\pi_{o}$ for all $o \in \mathcal O$, policy over options, $\pi_{\mathcal O}$, and terminations, $\beta_{o}$ for all $o \in \mathcal O$, then: 
\[\mathbb E[\delta^U_t | s_t, a_t, o_t, \pi_{o_t}, \pi_{\mathcal O}, \beta_{o_t}] = a_{U}(s_t, o_t, a_t).\]
\end{lemma}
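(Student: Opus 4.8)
The plan is to expand $\mathbb{E}[\delta^U_t \mid \cdot]$ by linearity of conditional expectation, evaluate the three resulting terms, and recognize their sum as the Bellman-style decomposition $q_U(s_t,o_t,a_t) - q_{\pi_{\mathcal O}}(s_t,o_t)$. Write $\mathbb{E}_t[\cdot]$ for $\mathbb{E}[\cdot \mid s_t, a_t, o_t, \pi_{o_t}, \pi_{\mathcal O}, \beta_{o_t}]$. Since $\delta^U_t = r_t + \gamma \hat u(o_t, s_{t+1}) - \hat q_{\pi_{\mathcal O}}(s_t, o_t)$, linearity gives
\[
\mathbb{E}_t[\delta^U_t] = \mathbb{E}_t[r_t] + \gamma\, \mathbb{E}_t[\hat u(o_t, s_{t+1})] - \mathbb{E}_t[\hat q_{\pi_{\mathcal O}}(s_t, o_t)].
\]
The first term equals $R(s_t, a_t)$ directly from the definition of the reward function. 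The third term equals $q_{\pi_{\mathcal O}}(s_t, o_t)$ by the stated consistency of $\hat q_{\pi_{\mathcal O}}$; here I would note that enlarging the conditioning set by $a_t$ does not affect this, since $q_{\pi_{\mathcal O}}(s_t,o_t)$ does not depend on $a_t$ and the estimator is assumed consistent under this conditioning.

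For the middle term I would apply the tower rule, conditioning further on $S_{t+1}$: the distribution of $S_{t+1}$ given $(s_t,a_t,o_t)$ is $P(s_t,a_t,\cdot)$ (the option label does not enter the transition once the action is fixed), and $\hat u$ is a consistent estimate of $u$, so
\[
\mathbb{E}_t[\hat u(o_t, s_{t+1})] = \sum_{s'} P(s_t, a_t, s')\, u(o_t, s').
\]
Combining the three terms, $\mathbb{E}_t[\delta^U_t] = R(s_t,a_t) + \gamma \sum_{s'} P(s_t,a_t,s')\, u(o_t,s') - q_{\pi_{\mathcal O}}(s_t, o_t)$.

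It then remains to verify the identity $q_U(s,o,a) = R(s,a) + \gamma \sum_{s'} P(s,a,s')\, u(o,s')$. Starting from $q_U(s,o,a) = \mathbb{E}[\sum_{k=0}^{\infty}\gamma^k R_k \mid S_0{=}s, O_0{=}o, A_0{=}a]$, I would peel off the $k=0$ term (which contributes $R(s,a)$) and condition the remaining discounted tail on $S_1 = s'$; given $(s,o,a)$ we have $S_1 \sim P(s,a,\cdot)$, and the tail $\mathbb{E}[\sum_{k\ge 1}\gamma^{k-1}R_k \mid S_1{=}s', O_0{=}o]$ is, by the Markov property and the definition of the value function upon arrival, exactly $u(o,s')$. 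Substituting yields $\mathbb{E}_t[\delta^U_t] = q_U(s_t,o_t,a_t) - q_{\pi_{\mathcal O}}(s_t,o_t) = a_U(s_t,o_t,a_t)$.

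The main obstacle is the careful handling of the two consistency hypotheses, which are stated conditioned on $(s_t,o_t)$ and on $(o_t, s_{t+1})$ but are invoked under the larger conditioning $(s_t,a_t,o_t)$ (and, for $\hat u$, $(s_t,a_t,o_t,s_{t+1})$): one must argue, consistent with the paper's intent, that adding the current action to the conditioning preserves unbiasedness, which is intuitively clear because the targets $q_{\pi_{\mathcal O}}$ and $u$ are themselves conditional expectations measurable with respect to the enlarged information. The only genuinely computational step is the index-shifted Bellman identity for $q_U$; though short, it is the place where the definitions of $q_U$ and $u$ must be aligned precisely.
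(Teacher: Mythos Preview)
Your proposal is correct and follows essentially the same approach as the paper: expand $\delta^U_t$ by linearity, use consistency of $\hat q_{\pi_{\mathcal O}}$ and the tower property for $\hat u$, and then invoke the Bellman identity $q_U(s,o,a) = R(s,a) + \gamma \sum_{s'} P(s,a,s')\, u(o,s')$ to recognize the result as $a_U$. If anything, you are more careful than the paper in justifying why the consistency hypotheses survive enlargement of the conditioning set and in spelling out the Bellman decomposition of $q_U$.
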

\begin{lemma}
Under the precondition $o_t = o_{t - 1}$ and given intra-option policies, $\pi_{o}$ for all $o \in \mathcal O$, policy over options, $\pi_{\mathcal O}$, and terminations, $\beta_{o}$ for all $o \in \mathcal O$, then:  \label{lemma:2}
\[\mathbb E[\delta^{\mathcal O}_t | s_{t}, o_t, o_{t} {=} o_{t-1}, \pi_{o_t}, \pi_{\mathcal O}] = a_{\mathcal O}(s_{t}, o_{t - 1}).\]
\end{lemma}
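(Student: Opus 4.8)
The plan is to expand the conditional expectation of $\delta^{\mathcal O}_t = r_t + \gamma \hat{v}_{\pi_{\mathcal O}}(s_{t+1}) - \hat{v}_{\pi_{\mathcal O}}(s_t)$ term by term using linearity, and to use the precondition $o_t = o_{t-1}$ to pin down the one-step dynamics out of $(s_t, o_t)$. By linearity, $\mathbb E[\delta^{\mathcal O}_t \mid s_t, o_t, o_t {=} o_{t-1}, \pi_{o_t}, \pi_{\mathcal O}]$ splits as $\mathbb E[r_t \mid \cdot] + \gamma\, \mathbb E[\hat{v}_{\pi_{\mathcal O}}(s_{t+1}) \mid \cdot] - \mathbb E[\hat{v}_{\pi_{\mathcal O}}(s_t) \mid \cdot]$. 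The last term is immediate from the consistency of $\hat{v}_{\pi_{\mathcal O}}$: it equals $v_{\pi_{\mathcal O}}(s_t)$. So it suffices to show $\mathbb E[r_t + \gamma\, \hat{v}_{\pi_{\mathcal O}}(s_{t+1}) \mid s_t, o_t, o_t {=} o_{t-1}, \pi_{o_t}, \pi_{\mathcal O}] = q_{\pi_{\mathcal O}}(s_t, o_{t-1})$, since the claim then follows from $a_{\mathcal O}(s_t, o_{t-1}) = q_{\pi_{\mathcal O}}(s_t, o_{t-1}) - v_{\pi_{\mathcal O}}(s_t)$ together with $o_t = o_{t-1}$.

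For that identity, I would use the precondition $o_t = o_{t-1}$ to conclude that the agent is continuing option $o_{t-1}$ at $s_t$, so the next step is generated by drawing $a_t \sim \pi_{o_t}(s_t, \cdot)$, receiving $r_t$ with $\mathbb E[r_t \mid s_t, a_t] = R(s_t, a_t)$, and transitioning $s_{t+1} \sim P(s_t, a_t, \cdot)$, with $o_t$ the option active upon arrival at $s_{t+1}$. Applying the tower rule over $a_t$ and $s_{t+1}$, and using consistency of the next-state bootstrap evaluated in the context that $o_t$ carries into $s_{t+1}$ --- so that it targets the upon-arrival value $u(o_t, s_{t+1})$, which by \eqref{eq:onarrival} is precisely the right mixture of continuing $o_t$ (value $q_{\pi_{\mathcal O}}(s_{t+1}, o_t)$) and terminating (value $v_{\pi_{\mathcal O}}(s_{t+1})$) --- the expectation becomes $\sum_a \pi_{o_t}(s_t, a)\bigl(R(s_t, a) + \gamma \sum_{s'} P(s_t, a, s')\, u(o_t, s')\bigr)$. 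Expanding the definitions of the state-option-action value $q_U$ and the option-value $q_{\pi_{\mathcal O}}$ shows this equals $\sum_a \pi_{o_t}(s_t, a)\, q_U(s_t, o_t, a) = q_{\pi_{\mathcal O}}(s_t, o_t)$, which combined with the previous paragraph gives $\mathbb E[\delta^{\mathcal O}_t \mid \cdot] = q_{\pi_{\mathcal O}}(s_t, o_t) - v_{\pi_{\mathcal O}}(s_t) = a_{\mathcal O}(s_t, o_{t-1})$.

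I expect the main obstacle to be exactly the step where the next-state bootstrap must reproduce the upon-arrival value $u(o_t, s_{t+1})$ rather than the plain state value $v_{\pi_{\mathcal O}}(s_{t+1})$: these coincide only when the option is certain to terminate at $s_{t+1}$, so the argument must genuinely invoke the precondition $o_t = o_{t-1}$ (the option is committed and persists into $s_{t+1}$) together with \eqref{eq:onarrival}, and one must be precise about which conditioning the ``consistent estimate'' of the next-state value is taken under --- in particular, $\beta_{o_t}$ does not appear in the outer conditioning because the bootstrap itself absorbs the termination randomness. Everything else --- linearity, the tower rule, and unwinding $q_U$ and $q_{\pi_{\mathcal O}}$ --- is routine bookkeeping, structurally parallel to the one-step advantage-as-TD-error identity used for $\delta^U_t$ in Lemma 1 and in the incremental natural actor--critic of \citet{Bhatnagar:2007:INA:2981562.2981576}.
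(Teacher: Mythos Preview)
Your overall plan tracks the paper's proof closely: split $\delta_t^{\mathcal O}$ by linearity, use consistency of $\hat v_{\pi_{\mathcal O}}$ to replace the subtracted term by $v_{\pi_{\mathcal O}}(s_t)$, show the remaining expectation equals $q_{\pi_{\mathcal O}}(s_t,o_t)$, and then invoke $o_t=o_{t-1}$. The paper carries this out by summing over $(S_{t+1},O_{t+1},a_t)$ and identifying the continuation value with $q_{\pi_{\mathcal O}}(S_{t+1},O_{t+1})$; you route through $u(o_t,s')$ and the identity $q_{\pi_{\mathcal O}}(s,o)=\sum_a \pi_o(s,a)\,q_U(s,o,a)$. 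Structurally these are the same argument.

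The step you single out as the obstacle is genuinely the crux, and your resolution does not hold as written. By the paper's own definition, a consistent estimate satisfies $\mathbb E[\hat v_{\pi_{\mathcal O}}(s)\mid s,\pi_{\mathcal O},\pi_{o_t},\beta_{o_t}]=v_{\pi_{\mathcal O}}(s)$; since $\hat v_{\pi_{\mathcal O}}$ is a function of the state alone, conditioning on ``$o_t$ carries into $s_{t+1}$'' cannot shift its target to $u(o_t,s_{t+1})$, and the precondition $o_t=o_{t-1}$ only pins down the one-step dynamics out of $s_t$---it says nothing about what $\hat v_{\pi_{\mathcal O}}(s_{t+1})$ estimates. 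To obtain $u(o_t,s_{t+1})$ here the bootstrap would have to be $\hat u$, not $\hat v_{\pi_{\mathcal O}}$. It is worth noting that the paper's proof makes the same identification without further justification: its line $\mathbb E[v(S_{t+1})\mid\cdot]=\sum_{s,o,a}\Pr(S_{t+1}{=}s,O_{t+1}{=}o,a_t{=}a\mid s_t,o_t)\,q_{\pi_{\mathcal O}}(s,o)$ tacitly replaces $v_{\pi_{\mathcal O}}(s)$ by $\mathbb E[q_{\pi_{\mathcal O}}(s,O_{t+1})\mid S_{t+1}{=}s,o_t]=u(o_t,s)$, which is exactly the substitution you are attempting. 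So your instinct that this is the delicate point is correct; your argument and the paper's stand or fall together on it.
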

The proofs are in the appendix (supplementary materials). Using these lemmas and theorems we introduce algorithm \ref{algo:inoc} (INOC). We provide details on how we arrive at the updates to parameters $\eta$ and $\varphi$ in the appendix. The precondition $o_t = o_{t - 1}$ might lead to fewer updates to the parameters of the terminations. The options evaluation part in the algorithm is the same as in previous work \citep{DBLP:journals/corr/BaconHP16}.

\begin{algorithm}
\caption{Incremental Natural Option-Critic Algorithm (INOC)}
\begin{algorithmic}[1]
    \STATE  $s_0 \leftarrow d_0$ and choose $o$ using $\pi_{\mathcal O}$.
    \WHILE{Not in terminal state}
        \STATE Select action $a_t$ as per $\pi_{o_t}$
        \STATE Take action $a_t$ observe $s_{t+1}, r_{t}$ \\
        \STATE $e_{\eta} \leftarrow \lambda e_{\eta} + \frac{\partial \ln{\pi_{o_t}(s_{t}, a_{t}, \theta)}}{\partial \theta}$
        \STATE $\delta^U_t \leftarrow r_t + \gamma  u(o_t, s_{t+1}) - q_{\pi_{\mathcal O}}(s_{t}, o_{t})$
        \STATE $\texttt{temp} =  \frac{\partial \ln{\pi_{o_t}(s_{t}, a_{t}, \theta)}}{\partial \theta}$
        \STATE $\eta \leftarrow \eta + \alpha_{\eta} \delta^U_t  e_{\eta} - \alpha_{\eta} \texttt{temp} \times \texttt{temp}^T \times \eta $
        \STATE $\theta \leftarrow \theta + \alpha_{\theta} \frac{\eta}{||\eta||_2}$
        \IF{$o_t$ is the same as $o_{t-1}$}
            \STATE $e_{\varphi} \leftarrow \lambda e_{\varphi} + \frac{\partial \ln{\beta_{o_{t-1}}(s_{t}, \vartheta)}}{\partial \vartheta}$
            \STATE $\delta_t^{\mathcal O} \leftarrow r_t  + \gamma v_{\pi_{\mathcal O}}(s_{t + 1}) - \gamma v_{\pi_{\mathcal O}}(s_{t})$
            \STATE $\texttt{temp} = \frac{\partial \ln{\beta_{o_{t-1}}(s_{t}, \vartheta)}}{\partial \vartheta}$
            \STATE $\varphi \leftarrow \varphi + \alpha_{\varphi} \beta_{o_{t-1}}(s_{t}, \vartheta) \delta_t^{\mathcal O}  e_{\varphi} + \alpha_{\varphi} \texttt{temp} \times \texttt{temp}^T \times \varphi $
            \STATE $\vartheta \leftarrow \vartheta - \alpha_{\vartheta} \frac{\varphi}{||\varphi||_2}$
        \ENDIF
        \IF{should terminate $o_{t}$ in $s_{t+1}$ according to $\beta_{o_{t}}$}
            \STATE Choose $o_{t + 1}$ according to $\pi_{\mathcal O}$ and reset $\eta, \varphi, e_{\eta}, e_{\varphi}$
        \ENDIF
    \ENDWHILE \label{algo:inoc}
\end{algorithmic}
\end{algorithm}

\section{Experiments}
We look at the performance of natural option critic in three different types of domains: a simple 2 state MDP, one with linear state representations and one with neural networks for state representations, and compare it to option critic. In all the cases we use sigmoid terminations and linear-softmax intra-option policies, as in previous work \citep{DBLP:journals/corr/BaconHP16}.

\begin{figure}
    \centering
    \includegraphics[scale=0.4]{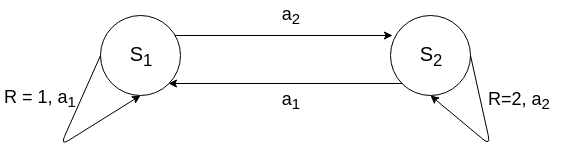}
    \caption{Simple deterministic MDP of two states and two actions}
    \label{fig:simplemdp}
\end{figure}

\begin{figure}
    \centering
    \includegraphics[scale=0.4]{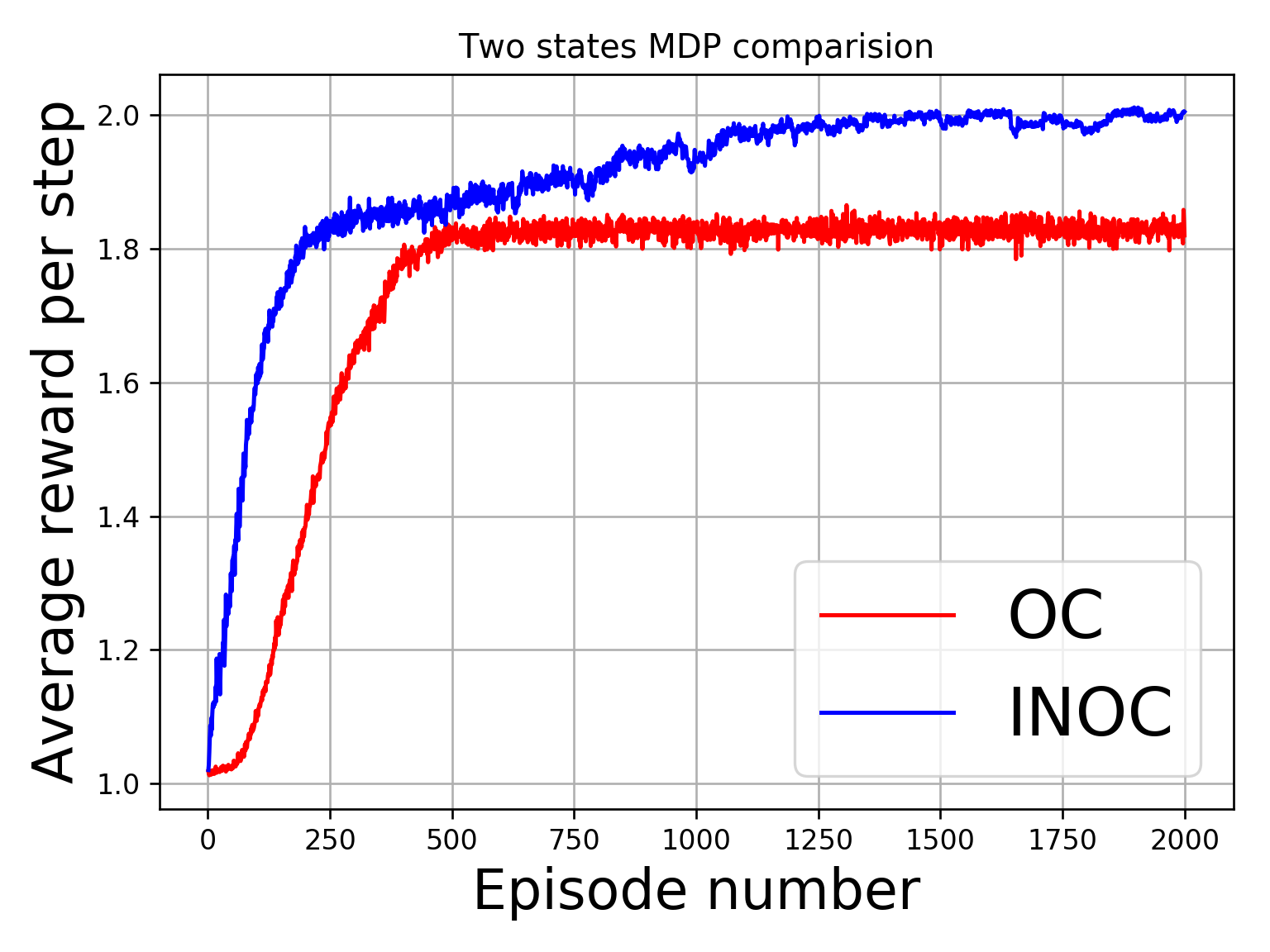}
    \caption{Average reward for INOC reaches the maxima while that of OC is stuck in a plateau. Results averaged over 200 runs of 2000 episodes.}
    \label{fig:inococsimple}
\end{figure}

{\bf MDP Setup: } 
We design an MDP to demonstrate the uniform weighting of the components of the natural termination gradient, $\widetilde{\nabla}_{\theta} q_{\pi_{\mathcal O}}(s_0, o_0)$, as opposed to using $\mu_{\mathcal O}(s, o)$. Note that the effectiveness of the natural policy gradient has been demonstrated sufficiently in past work (\citealp{Kakade:2001}; \citealp{bagnell2003covariant}; \citealp{Degris_model-freereinforcement}). We define a simple 2 state MDP as in Figure \ref{fig:simplemdp}. The initial state distribution is $d_0(s_1) = 0.8$ and $d_0(s_2) = 0.2$. The transitions are deterministic. The reward for self loops into $s_1$ and $s_2$ are 1 and 2, respectively. The episode terminates after 30 steps. We use an $\epsilon$-greedy policy over options, $\pi_{\mathcal O}$. 

We consider a scenario with two options, $o_1$ and $o_2$, each of which has probability 0.9 for actions $a_1$ and $a_2$, respectively, regardless of the state. This gives us options as abstractions over individual actions. We initialize the terminations, $\beta_o$, and option value function, $q_{\pi_{\mathcal O}}(s, o)$ such that they are biased towards the greedy action, $a_1$, in state $s_1$ via the selection of option $o_1$. Specifically, we set $\beta_{o_1}(s_1) = 0.1$ and $\beta_{o_1}(s_2) = 0.1$, this way the setup is biased towards higher probability of $\mu_{\mathcal O}(s_1, o_1)$. This presents the agent with the challenge of learning the more optimal action of transitioning to state $s_2$, despite the higher probability $\mu(s_1, o_1)$ and the self loop reward of $s_1$. We set the learning rate for the intra-option policies, $\alpha_{\theta}$, to be negligible as our goal is to demonstrate the efficacy of the natural termination gradient.

As can be seen from Figure \ref{fig:inococsimple}, the natural option critic converges to the optimal value, by overcoming the plateau, for average reward much faster than the option critic. The option critic is initially stuck in the greedy self-loop action, this is due to the weighting by $\mu_{\mathcal O}(s, o)$. Whereas the natural option critic begins learning early on and achieves the optimal average reward.

{\bf Four Rooms: } 
The four rooms domain \citep{Sutton1999BetweenMA} is a particularly favorable case for demonstrating the use of options. We use the same number of options, 4, as in previous work \citep{DBLP:journals/corr/BaconHP16}. The result (Figure \ref{fig:fourrooms_graph}) indicates that natural option critic converges faster.

{\bf Arcade Learning Environment: }

\begin{figure*}[!tbp]
\centering
\begin{minipage}[b]{0.24\textwidth}
\includegraphics[width=.9\linewidth]{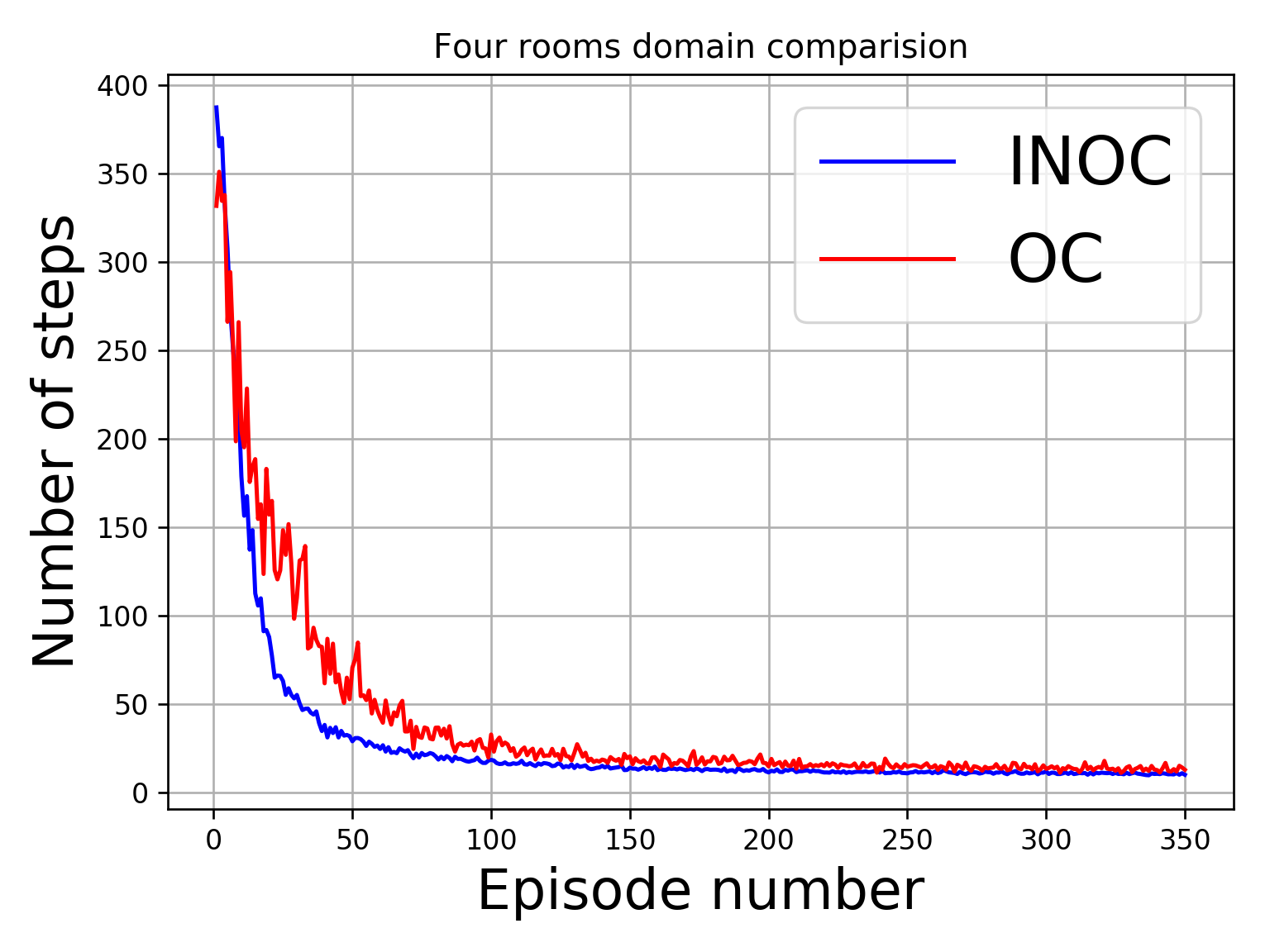}
\caption{Four rooms with $\alpha_{\theta} = \alpha_{\vartheta} = 0.0025$, $\alpha_{\eta} = 0.5$, $\alpha_{\varphi} =0.75$, $\lambda = 0.5$ and critic LR 0.5,  averaged over 350 runs }
\label{fig:fourrooms_graph}
\end{minipage}
\begin{minipage}[b]{0.74\textwidth}
\begin{subfigure}{.33\textwidth}
      \centering
      \includegraphics[width=.9\linewidth]{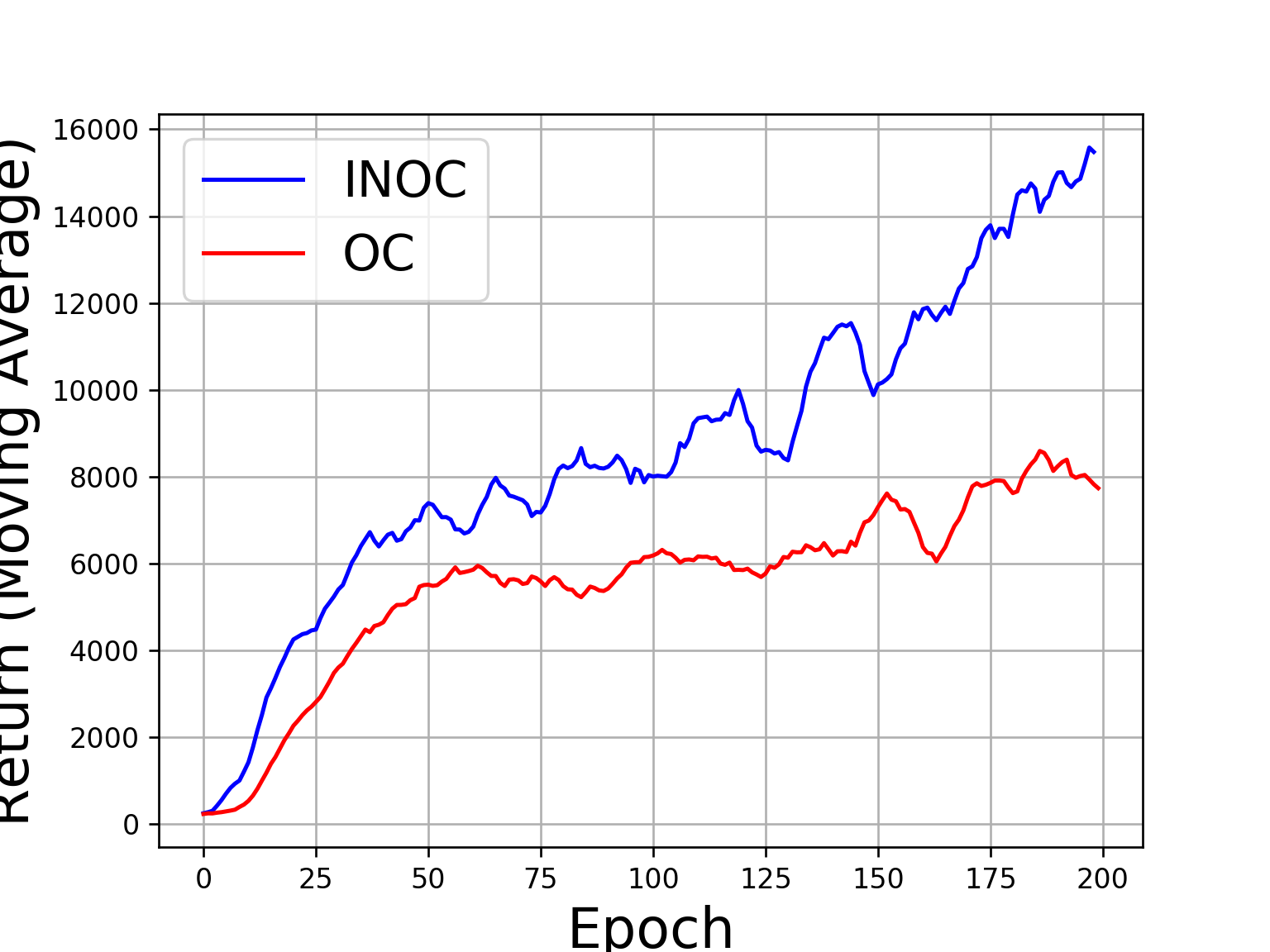}
      \caption{Asterisk}
      \label{fig:sfig1}
    \end{subfigure}%
    \begin{subfigure}{.33\textwidth}
      \centering
      \includegraphics[width=.95\linewidth]{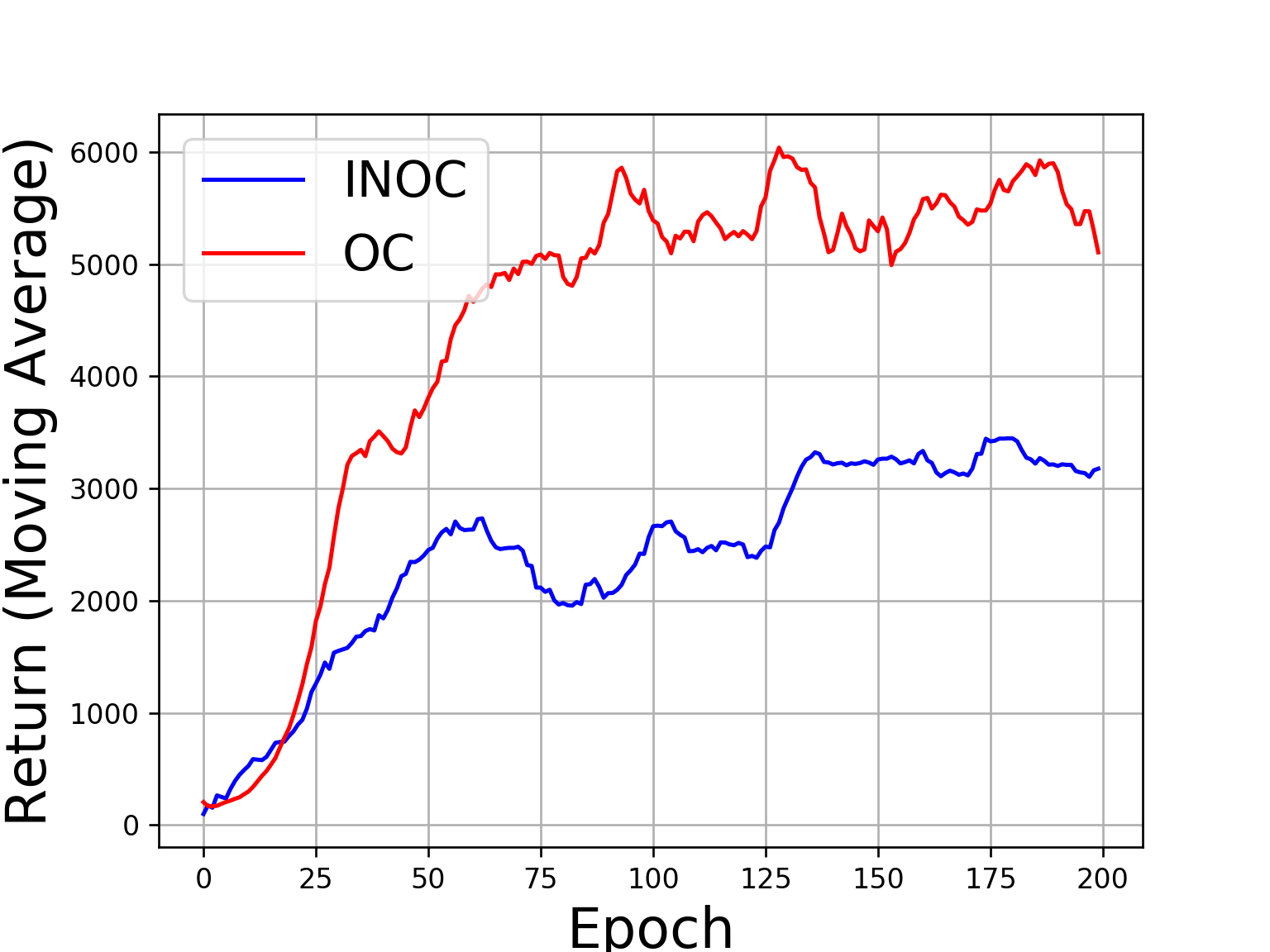}
      \caption{Seaquest}
      \label{fig:sfig2}
    \end{subfigure}
    \begin{subfigure}{.33\textwidth}
      \centering
      \includegraphics[width=.95\linewidth]{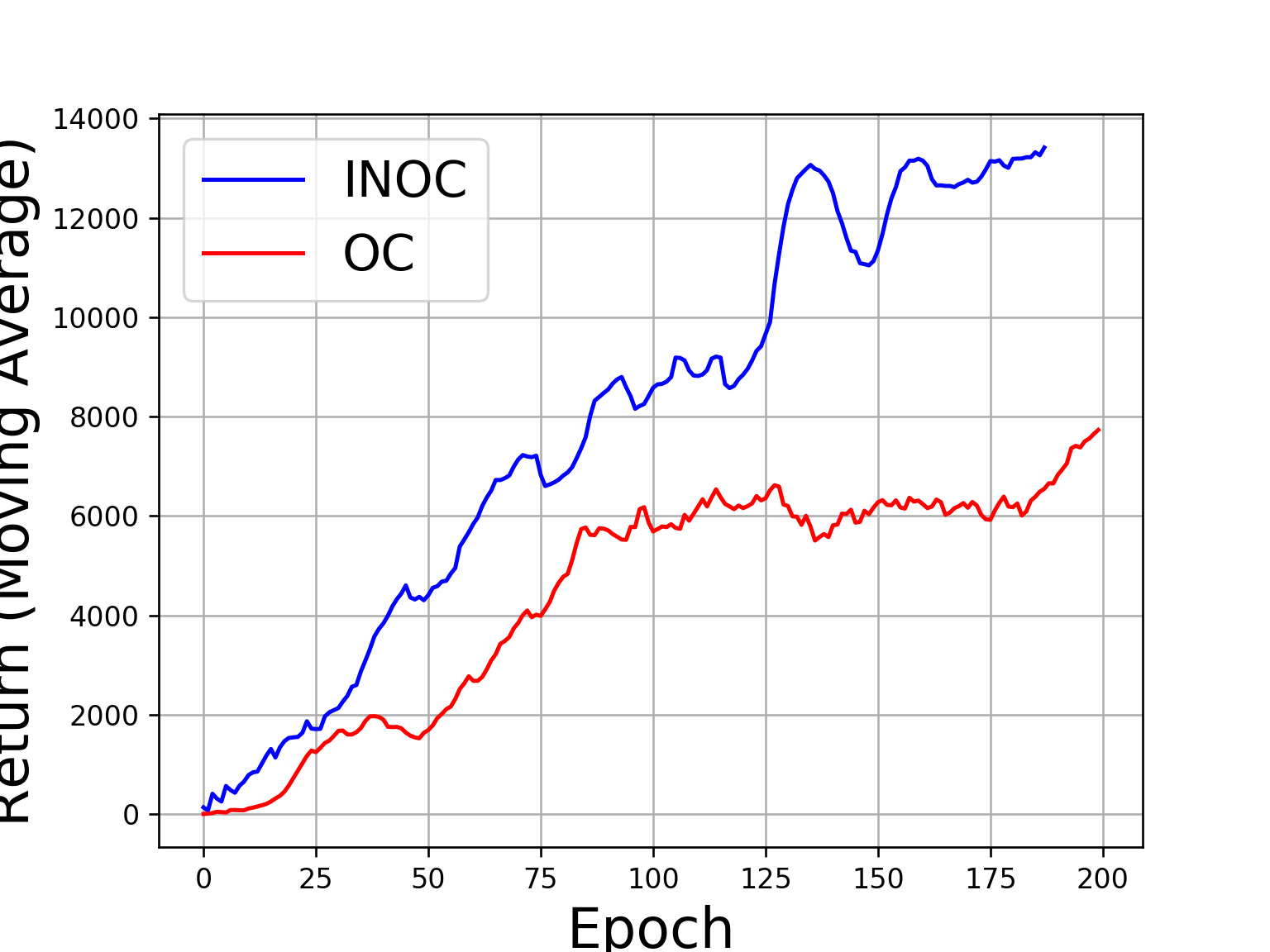}
      \caption{Zaxxon}
      \label{fig:sfig3}
    \end{subfigure}
    \caption{Moving average of 10 returns for a single trial for Arcade learning Environment, with  $\alpha_{\theta} = \alpha_{\vartheta} = 0.0025$, $\alpha_{\eta} = \alpha_{\varphi} = 0.75$, and $\lambda = 0.5$}
    \label{fig:graphs_ale}
\end{minipage}
\end{figure*}

We compare natural option-critic with the option critic framework on the Arcade Learning Environment \citep{Bellemare2013TheAL}. To showcase the improvement over the option-critic architecture we use the same configuration for all the layers as in previous work \citep{DBLP:journals/corr/BaconHP16}. Which in turn uses the same configuration for the first 3 convolutional layers of the network introduced by \citet{Mnih2013PlayingAW}. The critic network was trained, similar to previous work \citep{DBLP:journals/corr/BaconHP16}, using experience replay \citep{Mnih2013PlayingAW} and RMSProp.

As in previous work \citep{DBLP:journals/corr/BaconHP16}, we apply the regularizer prescribed by \citet{Mnih2016AsynchronousMF} to penalize low entropy policies. We use an on-policy estimate of the policy over options, $\pi_{\mathcal O}$, which is used in the computation of the natural gradient with respect to the termination parameters.

We compare the two approaches, option critic and natural option critic, by evaluating them for the games \textit{Asterisk, Seaquest,} and \textit{Zaxxon} \citep{DBLP:journals/corr/BaconHP16}. For comparison we run training over same number of frames per epoch as done by \citet{DBLP:journals/corr/BaconHP16}, running the same number of trial and use the same number of options: 8. We demonstrate the results in Figure \ref{fig:graphs_ale}. 
More importantly, we use the same hyperparameters, for learning rates and entropy regularization, as in previous work to merit a fair comparison. We obtain improvements on the option-critic architecture (OC) for Asterisk and Zaxxon. 
We also note that we were unable to reproduce the results for Seaquest for option critic, but having given the same set of hyperparameters we observe that option critic performs better. We explain the issue with termination updates, and it's effect on the return, for Seaquest in the appendix.

For Zaxxon and Asterisk we see that NOC breaks the plateau much earlier than option critic. Note that the value network, for approximating $Q_{U}$, is learned using vanilla gradient.

\section{Discussion}
We have introduced a natural gradient based approach for learning intra-option policies and terminations, within the option-critic framework, which is linear in the number of parameters. More importantly, we have furnished instructive proofs on deriving the Fisher information matrix over path manifolds and corresponding function approximations based approach while reducing mean squared errors. We have also introduced an algorithm that uses consistent estimates of the advantage functions and learn the natural gradient by learning coefficients of the corresponding linear function approximators. The results showcase performance improvements on previous work. The proofs for finite horizon metrics are very similar to the ones provided by \citet{bagnell2003covariant}. We  also demonstrate the effectiveness of natural option critic in three distinct domains. 

As discussed by \citet{Thomas2014BiasIN} we can obtain a truly unbiased estimate for our updates, but it may not be practical \citep{Thomas2014BiasIN}. The limitations that apply to the option-critic framework, except the use of vanilla gradient, apply. We use a block diagonal estimate of the Fisher information matrix. The complete Fisher information matrix for the option-critic framework over path manifolds is:
\[
G_{\theta, \vartheta} = \begin{bmatrix}
 G_{\theta} & \langle \frac{\partial X}{\partial \theta} \frac{\partial X}{\partial \vartheta}\rangle \\
\langle \frac{\partial X}{\partial \vartheta} \frac{\partial X}{\partial \theta}\rangle & G_{\vartheta}
\end{bmatrix},\]
where $G_{\theta}$ and $G_{\vartheta}$ are the Fisher information matrices for intra-option path manifold and state-option transition manifold, respectively. The random variable $X$ is the path variable over state-option-action tuples. The computation of the complete Fisher information matrix suffers and its inverse is expensive and needs a compatible function approximation based approach to obtain a natural gradient estimate with space complexity linear in number of parameters.

Although our approach has added benefits it is limited by fewer updates of the termination policy. Work is required to develop better estimates of the advantage functions. More experimental work, e.g. applications to other domains, can further help understand the efficacy of natural gradients in the context of the option-critic framework.
\small

\bibliography{aaai-2019}
\bibliographystyle{aaai} 

\normalsize

\section*{Appendix}
Here, we provide proofs for the theorems and lemmas presented in the body of the paper and we also provide derivations for the estimates for the natural gradient. \textbf{Despite these proofs being in the appendix due to space constraint these are our major contributions}.
\subsection*{Alternate Form Of The Fisher Information Matrix}
We derive the following result, same as \citet{bagnell2003covariant} with the meanings of the symbols changed, for the Fisher information matrix under appropriate regularity conditions for $\mathcal{X}$:
\small
\begin{align}
\left ( G_{\theta} \right)_{i, j} =& \langle \partial_i \ln \Pr(X; \theta) \partial_j \ln \Pr(X; \theta)  \rangle_{\Pr(X)} \\
=& \sum_{X}  \partial_i \Pr(X; \theta) \partial_j \ln \Pr(X; \theta)\\
=& \sum_{X} \partial_i \Big (\Pr(X; \theta) \partial_j \ln \Pr(X; \theta) \Big)\\ 
&- \sum_{X} \Pr(X; \theta) \partial_i \partial_j \ln \Pr(X; \theta)\\
=& - \langle \partial_i \partial_j \ln \Pr(X; \theta) \rangle_{\Pr(X; \theta)} + \partial_i \partial_j \sum_{X} \Pr(X; \theta) \\
=& - \langle \partial_i \partial_j \ln \Pr(X; \theta) \rangle_{\Pr(X; \theta)}.
\end{align}
\normalsize

The first equality follows from the definition of Fisher information matrix. The third equality follows from integration by parts. The last equality is a result of the sum of probabilities being constant, i.e., $\sum_{X} \Pr(X) = 1$. The matrix $\langle \partial_i \ln \Pr(X) \partial_j \ln \Pr(X)  \rangle_{\Pr(X; \theta)}$ is positive semi-definite \citep{Amari1967ATO} and the derivations resulting from this expression inherit this property.
\subsection*{Proof Of Infinite Horizon Intra-Option Matrix}
\begin{thm*}[Infinite Horizon Intra-Option Matrix]
Let $G^{\mathcal T}_{\theta}$ be the $\mathcal T$-step finite horizon Fisher information matrix and $\langle G_{\theta} \rangle_{\mu_{\mathcal O}(s, o)}$ be the Fisher information matrix of intra-option policies under a stationary distribution of states, actions and options: $\pi_{o}(s,a,\theta) \mu_{\mathcal O}(s, o)$. Then:
\small
\[ \left ( \lim_{\mathcal T \to \infty} \frac{1}{\mathcal T} G^{\mathcal T}_{\theta} \right)_{i, j} = \langle G_{\theta} \rangle_{\mu_{\mathcal O}(s, o)}\]
\normalsize
\end{thm*}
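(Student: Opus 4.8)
The plan is to expand the log-probability of a truncated path $X$ into a sum over time steps, differentiate twice, take the expectation, and show that the cross-terms vanish so that only a "diagonal" sum of per-step Fisher contributions survives; dividing by $\mathcal T$ and passing to the limit then yields the stationary average. Concretely, for a path truncated at horizon $\mathcal T$ we have
\[
\ln \Pr(X;\theta) = \sum_{t=0}^{\mathcal T-1} \Big( \ln \pi_{o_t}(S_t,A_t,\theta) + \ln \Pr(S_{t+1}\mid S_t,A_t) + \ln \Pr(O_{t+1}\mid S_{t+1},O_t,\vartheta) \Big) + \text{(initial terms)},
\]
and only the $\ln \pi_{o_t}$ summands depend on $\theta$. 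Hence $\partial_i\partial_j \ln \Pr(X;\theta) = \sum_{t=0}^{\mathcal T-1} \partial_i\partial_j \ln \pi_{o_t}(S_t,A_t,\theta)$, with no cross-terms at all — the second derivative of a sum of terms each depending on $\theta$ is simply the sum of the second derivatives, so there is no need to argue that products of distinct-time score terms vanish; that simplification is precisely why we work with \eqref{eq:fimexpec} rather than the outer-product form.

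**Next I would** take the expectation under $\Pr(X;\theta)$ and invoke the alternate form \eqref{eq:fimexpec}, giving
\[
\big(G^{\mathcal T}_{\theta}\big)_{i,j} = -\sum_{t=0}^{\mathcal T-1} \big\langle \partial_i\partial_j \ln \pi_{o_t}(S_t,A_t,\theta) \big\rangle_{\Pr(X;\theta)} = \sum_{t=0}^{\mathcal T-1} \sum_{s,o,a} \Pr(S_t{=}s,O_t{=}o,A_t{=}a \mid s_0,o_0)\, \big(G^{\pi_o}_{\theta}(s,o,a)\big)_{i,j},
\]
where $G^{\pi_o}_{\theta}(s,o,a)$ is the single-step Fisher contribution $-\partial_i\partial_j \ln \pi_o(s,a,\theta)$ (equivalently its outer-product form, using the same alternate-form identity applied to the conditional distribution $\pi_o(s,\cdot,\theta)$). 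Using $\Pr(S_t{=}s,O_t{=}o,A_t{=}a) = \Pr(S_t{=}s,O_t{=}o)\pi_o(s,a,\theta)$ and summing the geometric-free weights over $t$, the coefficient of each per-step term is $\sum_{t=0}^{\mathcal T-1}\Pr(S_t{=}s,O_t{=}o\mid s_0,o_0)$.

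**The main obstacle** is the limit $\frac{1}{\mathcal T}\sum_{t=0}^{\mathcal T-1}\Pr(S_t{=}s,O_t{=}o\mid s_0,o_0) \to \mu_{\mathcal O}(s,o)$ and its justification. Here one must be careful about what $\mu_{\mathcal O}$ means in this theorem: the discounted weighting defined earlier includes $\gamma^t$, whereas the Cesàro average of the raw occupation probabilities is the stationary distribution of the state-option Markov chain. I would follow the convention established in the Kakade/Bagnell-style analysis cited in the paper — working in the undiscounted setting where the chain is ergodic and irreducible (so $\gamma$ is absorbed into an absorbing-with-restart interpretation, or simply taken as the $\gamma\to 1$ normalization noted after \eqref{eq:natgrad_kakade}) — so that $\frac{1}{\mathcal T}\sum_{t<\mathcal T}\Pr(S_t{=}s,O_t{=}o)$ converges to the stationary measure by the ergodic theorem for finite Markov chains, and this stationary measure is exactly what the statement denotes $\mu_{\mathcal O}(s,o)$. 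Granting this, exchanging the (finite, for fixed $\mathcal T$) sum with the limit is immediate, and one obtains
\[
\Big(\lim_{\mathcal T\to\infty}\tfrac{1}{\mathcal T}G^{\mathcal T}_{\theta}\Big)_{i,j} = \sum_{s,o,a}\mu_{\mathcal O}(s,o)\,\pi_o(s,a,\theta)\,\big(G^{\pi_o}_{\theta}(s,o,a)\big)_{i,j} = \big\langle G_{\theta}\big\rangle_{\mu_{\mathcal O}(s,o)},
\]
which is the claim. The remaining bookkeeping — checking the regularity conditions that legitimize \eqref{eq:fimexpec} for truncated paths, and confirming that the initial-state and transition-kernel terms genuinely contribute nothing to the $\theta$-derivatives — is routine and parallels \citet{bagnell2003covariant}.
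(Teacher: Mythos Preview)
Your proposal is correct and follows essentially the same approach as the paper: both use the alternate second-derivative form \eqref{eq:fimexpec}, factor the path probability via the Markov property, observe that only the $\pi_{o_t}$ factors depend on $\theta$ so that transition and option-transition terms drop out, and then invoke the ergodic theorem to pass from the time average to the stationary expectation $\mu_{\mathcal O}(s,o)$. Your presentation is slightly more streamlined---by differentiating the additive log directly you bypass the paper's quotient-rule step and its separate verification that $\sum_a \partial_i\partial_j \pi_o(s,a,\theta)=0$---but the underlying argument is the same.
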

\begin{proof}
$G^{\mathcal T}_{\theta}$ is the $\mathcal T $-step finite horizon Fisher information matrix.
\small
\begin{align}
\left ( \lim_{\mathcal T \to \infty} \frac{1}{\mathcal T} G^{\mathcal T}_{\theta} \right)_{i, j}
=& \lim_{T \to \infty} -\frac{1}{\mathcal T} \langle \partial_i \partial_j \ln \Pr(X) \rangle_{\Pr(X)} \\
=& \lim_{\mathcal T \to \infty} - \frac{1}{\mathcal T} \sum_{X} \Pr(X; \theta) \partial_i \frac{\partial_j \Pr(X; \theta)}{\Pr(X; \theta)} \label{eq:probratio}
\end{align}
\normalsize
The process represented by the path $X$ is Markovian, meaning $\Pr(S_t, O_t | S_{t - 1}, O_{t - 1}, S_{t - 2}, O_{t - 2}, ...) = \Pr(S_t, O_t | S_{t - 1}, O_{t - 1})$. This leads to the following result for the likelihood probability, similar to the simple form of the path probability metric presented by \citet{bagnell2003covariant}:
\small
\begin{align}
\frac{\partial_i \Pr(X; \theta)}{ \Pr(X; \theta)}  =& \partial_i \ln \Pr(X; \theta)
\\ =& \partial_i \ln \Pi_{t = 1}^{\mathcal T} \Pr(S_t, O_t | S_{t - 1}, O_{t - 1}; \theta)\\
=& \sum_{t = 1}^{\mathcal T} \partial_i \ln  \Pr(S_t, O_t | S_{t - 1}, O_{t - 1}; \theta) \\
=& \sum_{t = 1}^{\mathcal T} \frac{\partial_i \Pr(S_t, O_t | S_{t - 1}, O_{t - 1}; \theta)}{\Pr(S_t, O_t | S_{t - 1}, O_{t - 1}; \theta)}. \label{eq:markovsplit}
\end{align}
\normalsize
A reinforcement learning problem discounted with a discount factor $\gamma$ is equivalent to an undiscounted problem where the MDP terminates with probability $1 - \gamma $ in each state. We use this formulation of the problem to derive the results as we go further. Applying the chain rule to \eqref{eq:probratio} and using $\mu_{\mathcal O}(s, o)$ to denote the stationary start state distribution we obtain:
\small
\begin{align}
&\left ( \lim_{t \to \infty} \frac{1}{\mathcal T} G_{\theta} \right)_{i, j} \\
=& \lim_{\mathcal T \to \infty} - \frac{1}{\mathcal T} \Big \langle \sum_t  \Big (\frac{\partial_i \partial_j \Pr(S_t, O_t | S_{t - 1}, O_{t - 1}; \theta)}{\Pr(S_t, O_t | S_{t - 1}, O_{t - 1}; \theta)} \\
& - \frac{\partial_i \Pr(S_t, O_t | S_{t - 1}, O_{t - 1}; \theta) \partial_j \Pr(S_t, O_t | S_{t - 1}, O_{t - 1}; \theta)}{\Pr(S_t, O_t | S_{t - 1}, O_{t - 1}; \theta)^2} \Big ) \Big \rangle_{\Pr(X; \theta)} \\
=& - \sum_{o, s, a} \mu_{\mathcal O}(s, o) \pi_{\mathcal O} (s, a, \theta) \Big ( \frac{\partial_i \partial_j \pi_{o}(s, a, \theta)}{\pi_{o}(s, a, \theta)} \\
& - \frac{\partial_i \pi_{o}(s, a, \theta) \partial_j \pi_{o}(s, a, \theta)}{\pi_{o}(s, a, \theta)^2}  \Big ), \label{eq:decompose}
\end{align}
\normalsize
where $\mu_{\mathcal O}(s, o)$ is the probability of $(s, o)$ in the stationary distribution with the precondition $(s_0, o_0)$. The second equality above follows from the ergodic theorem \citep{stein2009real} and from the observation that the terms in the numerator and the denominator cancel out as follows:
\small
\begin{align}
&\frac{\partial_i \Pr(S_t, O_t | S_{t - 1}, O_{t - 1}, a; \theta)}{\Pr(S_t, O_t | S_{t - 1}, O_{t - 1}, a; \theta)} \\
=&\frac{\partial_i \pi_{O_t}(S_{t - 1}, a, \theta) \gamma \Pr(S_t| S_{t - 1}, a) \Pr(O_t | O_{t-1}, S_t)}{\pi_{O_t}(S_{t-1}, a, \theta) \gamma \Pr(S_t| S_{t - 1}, a) \Pr(O_t | O_{t-1}, S_t)} \\
= &\frac{\partial_i \pi_{O_t}(S_{t - 1}, a, \theta)}{\pi_{O_t}(S_{t - 1}, a, \theta)},
\end{align}
\normalsize
where $\Pr(O_t | O_{t-1}, S_t) =(1 - \beta_{O_{t-1}}(S_{t -1}) )\textbf{1}_{O_{t - 1} {=} O_t} + \beta_{O_{t - 1}}(S_{t})\pi_{\mathcal{O}}(O_t, S_t)$ is the probability of option $O_t$ being active while exiting $S_t$ given that the option $O_{t - 1}$ is active when the agent enters $S_t$. Continuing the derivation from \eqref{eq:decompose}:
\small
\begin{align}
\Big ( \lim_{T \to \infty} & \frac{1}{T} G^T_{\theta} \Big )_{i, j} \\
=& \sum_{o, s, a} \mu_{\mathcal O}(s, o) \pi_{o} (s, a, \theta) \frac{\partial_i \pi_{o}(s, a, \theta) \partial_j \pi_{o}(s, a, \theta)}{\pi_{o}(s, a, \theta)^2} \\
&- \sum_{o, s} \mu_{\mathcal O}(s, o) \sum_{a} \partial_i \partial_j \pi_{o}(s, a, \theta) \\
=& \sum_{o, s, a} \mu_{\mathcal O}(s, o) \pi_{o} (s, a, \theta) \partial_i \ln \pi_{o}(s, a, \theta) \partial_j \ln \pi_{o}(s, a, \theta) \\
=& \langle G_{\theta} \rangle_{\mu_{\mathcal O}(s, o)}.
\end{align}
\normalsize

The second term in the first equality above vanishes because $\sum_{a} \pi_{o}(s, a, \theta)$ is constant.
\end{proof}
\subsection*{Derivation Of Compatible Function Approximation For Intra-Option Path Manifold}
Given the state-option-action advantage function, $a_U$, and its approximator $f^{\pi_{o}}_{\eta}$. For vector $\eta$ and parameters $\theta$ we have:
\small
\begin{align*}
    f_{\eta}^{\pi_{o}}(s,a) &= \eta^T \left(\frac{\partial \ln (\pi_{o}(s, a, \theta))}{\partial \theta}\right).
\end{align*}
\normalsize
Let $\tilde{\eta}$ minimize the squared error $\epsilon(\eta, \theta)$:
\[\epsilon(\eta, \theta) = \sum_{s, o, a} \mu_{\mathcal O} (s, o) \pi_{o}(s, a, \theta) (f_{\eta}^{\pi_{o}}(s,a) - a_{U}(s, o, a))^2,\] 
therefore, it satisfies $\partial \epsilon/\partial \eta = 0$:
\small
\begin{align*}
    &\sum_{s, o, a} \mu_{\mathcal O} (s, o) \pi_{o}(s, a, \theta) \frac{\partial \ln \pi_{o}(s, a, \theta)}{\partial \theta} (f_{\tilde{\eta}}^{\pi_{o}}(s,a) - a_{U}(s, o, a)) = 0 \\
    &\sum_{s, o, a} \mu_{\mathcal O} (s, o) \pi_{o}(s, a, \theta) \frac{\partial \ln \pi_{o}(s, a, \theta)}{\partial \theta} \frac{\partial \ln \pi_{o}(s, a, \theta)}{\partial \theta}^T \tilde{\eta} \\
    &= \sum_{s, o, a} \mu_{\mathcal O} (s, o) \pi_{o}(s, a, \theta) a_{U}(s, o, a).
\end{align*}
\normalsize
Combining this with the intra-option policy gradient theorem \citep{DBLP:journals/corr/BaconHP16} we get:
\small
\begin{align*}
    \sum_{s, o, a} \mu_{\mathcal O} (s, o) \pi_{o}(s, a, \theta) \frac{\partial \ln \pi_{o}(s, a, \theta)}{\partial \theta}& \frac{\partial \ln \pi_{o}(s, a, \theta)}{\partial \theta}^T \tilde{\eta} \\
    =& \frac{\partial q_{\pi_{\mathcal O}}(s_0, o_0)}{\partial \theta}.
\end{align*}
\normalsize
Finally, using Theorem 1 we obtain an estimate for the natural gradient of the expected discounted return:
\[ \widetilde{\nabla}_{\theta} q_{\pi_{\mathcal O}}(s_0, o_0) = G_{\theta}^{-1} \frac{\partial q_{\pi_{\mathcal O}}(s_0, o_0)}{\partial \theta} = \tilde{\eta}.\]
\subsection*{Proof Of Infinite Horizon State-Option Transition Matrix}
\begin{thm*}[Infinite Horizon State-Option Transition Matrix] Let $G^{\mathcal T}_{\vartheta}$ be the $\mathcal T$-step finite horizon Fisher information matrix and $\mu_{\mathcal O}(s', o)$ is the stationary distribution of state-option pair $s', o$. Then:
\small
\begin{align*}
&\left ( \lim_{\mathcal T \to \infty} \frac{1}{\mathcal T} G^{\mathcal T}_{\vartheta} \right)_{i, j}\\ 
&= - (\langle  \partial_i \ln \beta_{o}(s^\prime, \vartheta) \partial_j  \ln (1 - \beta_{o}(s^{\prime}, \vartheta) + \beta_{o}(s^{\prime}, \vartheta) \pi_{\mathcal O}(s^{\prime}, o)) \rangle_{\mu_{\mathcal O}(s', o)})_{i,j}
\end{align*}
\normalsize
\end{thm*}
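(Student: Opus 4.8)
The plan is to mirror the proof of Theorem~\ref{thm:1} line for line, with the one‑step termination kernel $\Pr(O_t \mid O_{t-1}, S_t; \vartheta)$ taking over the role that the intra‑option policy $\pi_o$ played there. Starting from the alternate form $\left(G^{\mathcal T}_{\vartheta}\right)_{i,j} = -\langle \partial_i \partial_j \ln \Pr(X'; \vartheta) \rangle_{\Pr(X'; \vartheta)}$ and the Markov structure of $X' = (O_0, S_1, O_1, S_2, \dots)$, the only factors of $\Pr(X'; \vartheta)$ that depend on $\vartheta$ are the termination kernels $\Pr(O_t \mid O_{t-1}, S_t; \vartheta) = (1 - \beta_{O_{t-1}}(S_t,\vartheta))\mathbf{1}_{O_{t-1}=O_t} + \beta_{O_{t-1}}(S_t,\vartheta)\pi_{\mathcal O}(S_t, O_t)$, since the intervening state‑transition factors carry only $\theta$‑dependence. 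Hence $\partial_i \ln \Pr(X'; \vartheta) = \sum_{t=1}^{\mathcal T} \partial_i \ln \Pr(O_t \mid O_{t-1}, S_t; \vartheta)$, and likewise for the mixed second derivative, so $\tfrac{1}{\mathcal T}G^{\mathcal T}_{\vartheta}$ is a time average of per‑transition terms indexed by $(O_{t-1}, S_t) \to O_t$.

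Next I would apply the identity $-\partial_i\partial_j \ln p = -\tfrac{\partial_i\partial_j p}{p} + \tfrac{\partial_i p\,\partial_j p}{p^2}$ to each factor $p = \Pr(O_t \mid O_{t-1}, S_t; \vartheta)$ and take the expectation. Conditioning on $(O_{t-1}, S_t) = (o, s')$, the first term sums to $-\partial_i\partial_j \sum_{O_t} p = 0$ because $\sum_{O_t} p = 1$, exactly as in Theorem~\ref{thm:1}, while the second term leaves $\sum_{O_t} \tfrac{\partial_i p\,\partial_j p}{p}$, which is a function of $(o, s')$ only. The ergodic theorem then replaces $\tfrac{1}{\mathcal T}\sum_{t=1}^{\mathcal T}$ by an expectation over the stationary distribution $\mu_{\mathcal O}(s', o)$ of the time‑shifted Markov chain on state‑option pairs, giving $\left(\lim_{\mathcal T\to\infty}\tfrac1{\mathcal T}G^{\mathcal T}_{\vartheta}\right)_{i,j} = \big\langle \sum_{O_t} \tfrac{\partial_i p\,\partial_j p}{p} \big\rangle_{\mu_{\mathcal O}(s', o)}$ with $p = \Pr(O_t \mid o, s'; \vartheta)$.

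The only substantive computation is the inner sum over $O_t$. Abbreviating $\beta = \beta_o(s',\vartheta)$ and treating $\pi_{\mathcal O}$ as independent of $\vartheta$ (it is trained by intra‑option Q‑learning), write $\Pr(O_t = o \mid o, s') = 1 - \beta + \beta\pi_{\mathcal O}(s', o) =: q_o$ and $\Pr(O_t = k \mid o, s') = \beta\pi_{\mathcal O}(s', k)$ for $k \neq o$, so $\partial_i q_o = (\pi_{\mathcal O}(s',o) - 1)\partial_i\beta$ and $\partial_i \Pr(O_t = k) = \pi_{\mathcal O}(s',k)\partial_i\beta$. Summing, the $k \neq o$ terms contribute $\tfrac{(1 - \pi_{\mathcal O}(s',o))\,\partial_i\beta\,\partial_j\beta}{\beta}$ (using $\sum_{k\neq o}\pi_{\mathcal O}(s',k) = 1 - \pi_{\mathcal O}(s',o)$) and the $O_t = o$ term contributes $\tfrac{(1-\pi_{\mathcal O}(s',o))^2\,\partial_i\beta\,\partial_j\beta}{q_o}$; adding them and using $\beta(1-\pi_{\mathcal O}(s',o)) + q_o = 1$ collapses the sum to $\tfrac{(1 - \pi_{\mathcal O}(s',o))\,\partial_i\beta\,\partial_j\beta}{\beta\,q_o}$. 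Since $\tfrac{\partial_i\beta}{\beta} = \partial_i\ln\beta_o(s',\vartheta)$ and $\tfrac{(\pi_{\mathcal O}(s',o)-1)\partial_j\beta}{q_o} = \partial_j\ln\big(1 - \beta_o(s',\vartheta) + \beta_o(s',\vartheta)\pi_{\mathcal O}(s',o)\big)$, this equals $-\,\partial_i\ln\beta_o(s',\vartheta)\,\partial_j\ln\big(1 - \beta_o(s',\vartheta) + \beta_o(s',\vartheta)\pi_{\mathcal O}(s',o)\big)$; taking the $\mu_{\mathcal O}(s',o)$‑expectation yields the stated formula.

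The main obstacle is the bookkeeping in that inner sum --- in particular producing the asymmetric form in which one factor stays $\partial_i\ln\beta_o$ while the other becomes $\partial_j\ln(1-\beta_o+\beta_o\pi_{\mathcal O})$ --- which hinges on separating the $O_t = o$ case from the $O_t \neq o$ cases, invoking $\sum_{k\neq o}\pi_{\mathcal O}(s',k) = 1 - \pi_{\mathcal O}(s',o)$ at the right step, and the cancellation $\beta(1-\pi_{\mathcal O}(s',o)) + q_o = 1$. Secondary care is needed to justify interchanging the limit, the sum over paths, and the derivatives (the regularity conditions referenced in the appendix) and to justify the ergodic‑theorem step for the one‑step‑shifted chain on $(s',o)$; both are handled just as in Theorem~\ref{thm:1} and in \citet{bagnell2003covariant}.
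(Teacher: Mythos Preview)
Your proposal is correct and follows essentially the same route as the paper's proof: both start from the alternate form of the Fisher information, use the Markov factorization of $\Pr(X';\vartheta)$ to reduce to per-step option-transition kernels, apply the identity $-\partial_i\partial_j\ln p = -\tfrac{\partial_i\partial_j p}{p}+\tfrac{\partial_i p\,\partial_j p}{p^2}$, invoke the ergodic theorem, and then carry out the same $O_t=o$ versus $O_t\neq o$ case split with the same algebraic collapse via $\beta(1-\pi_{\mathcal O})+q_o=1$. The only cosmetic differences are that you dispatch the first term via $\sum_{O_t}\partial_i\partial_j p=\partial_i\partial_j 1=0$ whereas the paper writes out the two cases and cancels them explicitly, and you sum over individual options $k\neq o$ whereas the paper aggregates them into a single event of probability $\beta_o(1-\pi_{\mathcal O})$; neither changes the argument.
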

\begin{proof}
\small
\begin{align}
&\left ( \lim_{\mathcal T \to \infty} \frac{1}{\mathcal T} G^{\mathcal T}_{\vartheta} \right)_{i, j} 
= \lim_{\mathcal T \to \infty} -\frac{1}{\mathcal T} \langle \partial_i \partial_j \ln \Pr(X^\prime; \vartheta) \rangle_{\Pr(X^\prime; \vartheta)} \\
&= \lim_{\mathcal T \to \infty} - \frac{1}{\mathcal T} \Big \langle \sum_t \Big ( \frac{\partial_{i} \partial_j \Pr(S_{t +1}, O_t | S_t, O_{t - 1}; \vartheta)}{ \Pr(S_{t +1}, O_t | S_t, O_{t - 1}; \vartheta)} \\
& - \frac{\partial_i  \Pr(S_{t +1}, O_t | S_t, O_{t - 1}; \vartheta) \partial_j  \Pr(S_{t +1}, O_t | S_t, O_{t - 1}; \vartheta)}{ \Pr(S_{t +1}, O_t | S_t, O_{t - 1}; \vartheta)^2} \Big ) \Big \rangle_{\Pr(X^\prime; \vartheta)}. \label{eq:termcomplete}
\end{align}
\normalsize
The second equality follows from the simplification as in \eqref{eq:markovsplit} which is based on the fact that option transitions are Markovian, meaning $\Pr(S_t, O_{t -1} | S_{t - 1}, O_{t - 2}, S_{t - 2}, O_{t - 3}, ...; \vartheta) = \Pr(S_t, O_{t -1} | S_{t - 1}, O_{t - 2}; \vartheta) $. Before moving forward we first note that:
\small
\begin{align}
    &\Pr(S_{t + 1}, O_t | S_t, O_{t - 1}; \vartheta) \\
    =& \left ((1 - \beta_{O_{t -1}}(S_{t}, \vartheta)) \textbf{1}_{O_t = O_{ t -1}}  + \beta_{O_{t - 1}}(S_t, \vartheta) \pi_{\mathcal O}(S_t, O_t) \right ) \\
    &\times \left(\sum_{a} \pi_{O_t}(S_t, a)\gamma \Pr(S_{t+1}| S_t, a)\right) \label{eq:op_tran_prob_exp},
\end{align}
\normalsize
where $\times$ denotes scalar multiplication and $\Pr(S_{t + 1}, O_t | S_t, O_{t - 1}; \vartheta)$ is the probability of the agent transitioning to $(S_{t+1}, O_t)$ given that option $O_{t - 1}$ is active when the agent enters $S_t$. Expanding the first part of the expression in \eqref{eq:termcomplete}:
\small
\begin{align}
&\frac{\partial_i \partial_j \Pr(S_{t + 1}, O_t | S_t, O_{t - 1}; \vartheta)}{ \Pr(S_{t + 1}, O_t | S_t, O_{t - 1}; \vartheta)}  \\
&= \frac{\partial_i \partial_j \left ((1 - \beta_{O_{t -1}}(S_{t}, \vartheta)) \textbf{1}_{O_t {=} O_{ t -1}}  + \beta_{O_{t - 1}}(S_t, \vartheta) \pi_{\mathcal O}(S_t, O_t) \right)}{ \left ((1 - \beta_{O_{t -1}}(S_{t}, \vartheta)) \textbf{1}_{O_t {=} O_{ t{-}1}}  + \beta_{O_{t - 1}}(S_t, \vartheta) \pi_{\mathcal O}(S_t, O_t) \right)} \\
&= \frac{\partial_i \partial_j \Pr(O_{t} | S_{t}, O_{t - 1}; \vartheta)}{\Pr(O_{t} | S_{t}, O_{t - 1}; \vartheta)} \label{eq:ll_diff}.
\end{align}
\normalsize
Where,  $\Pr(O_{t} | S_{t}, O_{t - 1}; \vartheta) = (1 - \beta_{O_{t -1}}(S_{t}, \vartheta)) \textbf{1}_{O_t {=} O_{ t -1}}  + \beta_{O_{t - 1}}(S_t, \vartheta) \pi_{\mathcal O}(O_t | S_t)$ is the probability that the agent transitions to option $O_t$ given that the option $O_{t - 1}$ is active as it entered $S_t$. Expanding the first term in \eqref{eq:termcomplete} using \eqref{eq:ll_diff} we obtain:
\small
\begin{align}
\lim_{T \to \infty}& - \frac{1}{T} \langle \sum_t \frac{\partial_i \partial_j \Pr(S_{t + 1}, O_t | S_t, O_{t - 1}; \vartheta)}{ \Pr(S_{t + 1}, O_t | S_t, O_{t - 1}; \vartheta)} \rangle_{\Pr(X^\prime; \vartheta)}\\
=& \lim_{T \to \infty} - \frac{1}{T} \langle \sum_t \frac{\partial_i \partial_j \Pr(O_{t} | S_{t}, O_{t - 1}; \vartheta)}{\Pr(O_{t} | S_{t}, O_{t - 1}; \vartheta)} \rangle_{\Pr(X^\prime; \vartheta)}\\
=& \sum_{s^{\prime}, o} \mu_{\mathcal O}(s^{\prime}, o) \Big ( \Pr(O_t \neq o | O_{t - 1} = o, S_t = s') \frac{\partial_i \partial_j  \beta_{o}(s', \vartheta)}{\beta_{o}(s', \vartheta)} \\
& + \Pr(O_t = o | O_{t - 1} = o, S_t = s') \\
& \frac{\partial_i \partial_j (1 - \beta_o(s', \vartheta) + \beta_o(s', \vartheta) \pi_{\mathcal O}(s', o))}{(1 - \beta_o(s', \vartheta) + \beta_o(s', \vartheta) \pi_{\mathcal O}(s', o))} \Big )\\
=& \sum_{s^{\prime}, o} \mu_{\mathcal O}(s^{\prime}, o) \Big ( \beta_{o}(s^\prime, \vartheta)(1 - \pi_{\mathcal O}(s^\prime, o))\frac{\partial_i \partial_j \beta_{o}(s^\prime, \vartheta)}{\beta_{o}(s^\prime, \vartheta)} \\
& + \left (1 - \beta_{o}(s^{\prime}, \vartheta) + \beta_{o}(s^{\prime}, \vartheta) \pi_{\mathcal O}(s^\prime, o) \right)\\
&\frac{(\pi_{\mathcal O}(s^\prime, o) - 1)\partial_i \partial_j \beta_{o}(s^\prime, \vartheta)}{\left (1 - \beta_{o}(s^\prime, \vartheta)  + \beta_{o}(s^\prime, \vartheta) \pi_{\mathcal O}(s^\prime, o) \right)}  \Big )\\
=& \sum_{s^{\prime}, o} \mu_{\mathcal O}(s^{\prime}, o) \left (1 - \pi_{\mathcal O}(s^\prime) + \pi_{\mathcal O}(s^\prime) - 1  \right ) \partial_i \partial_j \beta_{o}(s^\prime, \vartheta) = 0\label{eq:termfirstexpr}.
\end{align}
\normalsize

The second equality above follows from the Ergodic theorem \citep{stein2009real}. We define $\beta'_{o}$ as the distribution of continuing option, where $\beta_o'(s, \vartheta)$ (as opposed to $\beta_o(s, o)$) is the probability, parametrized by $\vartheta$, that the option $o$ is active while the agent is exiting $s'$ given that option $o$ is active when it enters $s'$. It is given by $\beta'_{o}(s', \vartheta) = (1 - \beta_o(s', \vartheta) + \beta_o(s', \vartheta) \pi_{\mathcal O}(s', o))$. We now evaluate the second term in \eqref{eq:termcomplete} and continue the derivation:
\small
\begin{align}
\lim_{T \to \infty} &\frac{1}{T} \langle \sum_t \frac{\partial_i  \Pr(S_{t +1}, O_t | S_t, O_{t - 1}; \vartheta) \partial_j  \Pr(S_{t +1}, O_t | S_t, O_{t - 1}; \vartheta)}{ \Pr(S_{t +1}, O_t | S_t, O_{t - 1}; \vartheta)^2} \rangle_{\Pr(X^{\prime}; \vartheta)}\\
=& \sum_{s^\prime, o} \mu_{\mathcal O}(s^\prime, o) \Big ( \Pr(O_t \neq o | O_{t - 1} = o, S_t = s') \frac{\partial_i \beta_{o}(s', \vartheta) \partial_j  \beta_{o}(s', \vartheta)}{\beta_{o}(s', \vartheta)^2}   \\
& + \Pr(O_t = o | O_{t - 1} = o, S_t = s') \frac{\partial_i \beta'_{o}(s', \vartheta) \partial_j \beta'_{o}(s', \vartheta)}{\beta'_{o}(s', \vartheta)^2} \Big ) \\
=& \sum_{s^\prime, o} \mu_{\mathcal O}(s^\prime, o) \Big (\beta_{o}(s^\prime, \vartheta)(1 - \pi_{\mathcal O}(s^{\prime}, o)) \frac{\partial_i \beta_{o}(s^\prime, \vartheta) \partial_j \beta_{o}(s^\prime, \vartheta)}{\beta_{o}(s^\prime, \vartheta)^2}  \\
& + \beta'_{o}(s', \vartheta) \frac{(1 - \pi_{\mathcal O}(s^{\prime}, o))^2 \partial_i \beta_{o}(s^\prime, \vartheta) \partial_j \beta_{o}(s^\prime, \vartheta)}{\left (1 - \beta_{o}(s^{\prime}, \vartheta) + \beta_{o}(s^{\prime}, \vartheta) \pi_{\mathcal O}(s^{\prime}, o) \right)^2} \Big)\\
=& \sum_{s^\prime, o} \mu_{\mathcal O}(s^\prime, o) \left ( \frac{(1 - \pi_{\mathcal O}(s^{\prime}, o)) \partial_i \beta_{o}(s^\prime, \vartheta) \partial_j \beta_{o}(s^\prime, \vartheta)}{\beta_{o}(s^\prime, \vartheta) \left (1 - \beta_{o}(s^{\prime}, \vartheta) + \beta_{o}(s^{\prime}, \vartheta) \pi_{\mathcal O}(s^{\prime}, o) \right)} \right )\\
=& \sum_{s^\prime, o} - \mu_{\mathcal O}(s^\prime, o)  \partial_i \ln \beta_{o}(s^\prime, \vartheta) \partial_j  \ln (1 - \beta_{o}(s^{\prime}, \vartheta) + \beta_{o}(s^{\prime}, \vartheta) \pi_{\mathcal O}(s^{\prime}, o)).
\end{align}
\normalsize
The first equality follows from the ergodic theorem \citep{stein2009real} and \eqref{eq:op_tran_prob_exp}. The third equality is a result of arithmetic simplification. Combining this result with \eqref{eq:termfirstexpr} we obtain:
\small
\begin{align} 
    \left ( G_{\vartheta} \right )_{i,j} =& - \sum_{s^\prime, o} \mu_{\mathcal O}(s^\prime, o) \frac{\partial \ln \beta_{o}(s^\prime, \vartheta)}{\partial \vartheta} \\
    &\left (\frac{\partial \ln (1 - \beta_{o}(s^\prime, \vartheta) + \beta_{o}(s^\prime, \vartheta) \pi_{\mathcal O}(s^\prime, o))}{\partial \vartheta}\right)^T. \label{eq:natgradterm}
\end{align}
\normalsize
\end{proof}
\subsection*{Derivation Of Compatible Function Approximation For Option Termination Path Manifold}
Given the advantage function of continued option, $a'_{\mathcal O}$, and its compatible function approximation $h^{\beta_{o}}_{\varphi}$:
\begin{align}
h^{\beta_{o}}_{\varphi}(s^{\prime})= &\left (\frac{\partial \ln(1 - \beta_{o}(s^{\prime}, \vartheta) + \pi_{\mathcal O}(s^{\prime}, o)\beta_{o}(s^{\prime}, \vartheta)))}{\partial \vartheta} \right)^T \\ 
& \phi^{\beta_{o}}(s^{\prime}, o). \label{eq:func_approx_afco_2}
\end{align}
Note that $\Pr(O_{t + 1} {=} o | O_t {=} o, S_{t + 1} {=} s^{\prime}; \vartheta) = 1 - \beta_{o}(s^{\prime}, \vartheta) + \pi_{\mathcal O}(s^{\prime}, o)\beta_{o}(s^{\prime}, \vartheta) = \beta'_{o}(s', \vartheta) $, follows from the definitions above. We defined the mean squared error $\epsilon(\varphi, \vartheta)$ associated with vector $\varphi$ as:
\begin{align*}
\epsilon(\varphi, \vartheta) = \sum_{s^{\prime}, o} &\mu_{\mathcal O}(s^{\prime}, o)L(O_{t + 1} {=} o | O_t {=} o, S_{t + 1} {=} s^{\prime}; \vartheta) (h^{\beta_{o}}_{\varphi}(s^{\prime}) - a'_{\mathcal O}(s^{\prime}, o))^2,
\end{align*}
where $L(O_{t + 1} {=} o | O_t {=} o, S_{t + 1} = s^{\prime}; \vartheta)$ is the likelihood ratio of option $o$ being continued given that we enter state $s^{\prime}$ in $o$. It is defined as follows:
\begin{align*}
  L(O_{t + 1} {=} o  |& O_t {=} o, S_{t + 1} =s^{\prime}; \vartheta)  \\
  =& \frac{\Pr(O_{t + 1} {=} o | O_t {=} o, S_{t + 1} = s^{\prime}; \vartheta)}{\Pr(O_{t + 1} {\neq} o | O_t {=} o, S_{t + 1} = s^{\prime}; \vartheta)}\\
  =& \frac{1 - \beta_{o}(s^{\prime}, \vartheta) + \pi_{\mathcal O}(s^{\prime}, o)\beta_{o}(s^{\prime}, \vartheta)}{\beta_{o}(s', \vartheta) (1 - \pi_{\mathcal O}(s'))}
\end{align*}
 We also note the following result related to the likelihood ratio, $L$, and the partial derivative of the probability of continuing option, $\beta'_{o}$, before proceeding:
\begin{align}
    L(O_{t + 1} {=} o |& O_t {=} o, S_{t + 1} {=} s^{\prime}; \vartheta) \frac{\partial \ln \beta'_{o}(s', o)}{\partial \vartheta} \\
    =& \frac{1 - \beta_{o}(s^{\prime}, \vartheta) + \pi_{\mathcal O}(s^{\prime}, o)\beta_{o}(s^{\prime}, \vartheta)}{\beta_{o}(s', o)(1 - \pi_{\mathcal O}(s', o))} \\
    &\frac{\partial \ln 1 - \beta_{o}(s^{\prime}, \vartheta) + \pi_{\mathcal O}(s^{\prime}, o)\beta_{o}(s^{\prime}, \vartheta)}{\partial \vartheta} \\
    =& \frac{1}{\beta_{o}(s', o)(1 - \pi_{\mathcal O}(s', o))}\\
    &\frac{\partial (1 - \beta_{o}(s^{\prime}, \vartheta) + \pi_{\mathcal O}(s^{\prime}, o)\beta_{o}(s^{\prime}, \vartheta))}{\partial \vartheta} \\
    =& - \frac{\partial \beta_{o}(s', \vartheta)}{ \beta_{o}(s', \vartheta) \partial \vartheta} = - \frac{\partial \ln \beta_o(s', \vartheta)}{\partial \vartheta}. \label{eq:lr_beta_prime_rel}
\end{align} 
Let $\tilde{\varphi}$ be a local minima of the expected squared error, $\epsilon(\varphi, \vartheta)$. Therefore, it satisfies $\partial \epsilon /\partial \varphi = 0$. So,
\small
\begin{align*}
    &\sum_{s^{\prime}, o} \mu_{\mathcal O}(s^{\prime}, o)L(O_{t + 1} {=} o | O_t {=} o, S_{t + 1} {=} s^{\prime}; \vartheta)\\
    &\times (\phi^{\beta_{o}}(s^{\prime}, o))^T (h^{\beta_{o}}_{\tilde{\varphi}}(s^{\prime}) - a'_{\mathcal O}(s^{\prime}, o)) = 0  \\
    &- \sum_{s^{\prime}, o}  \mu_{\mathcal O}(s^{\prime}, o) \frac{ \partial \ln{\beta_{o}(s^{\prime}, \vartheta)}} {\partial \vartheta} (h^{\beta_{o}}_{\tilde{\varphi}}(s^{\prime}) - a'_{\mathcal O}(s^{\prime}, o)) = 0 \\
    &\sum_{s^{\prime}, o} \mu_{\mathcal O}(s^{\prime}, o) \frac{\partial \ln{\beta_{o}(s^{\prime}, \vartheta)}} {\partial \vartheta} \\
    &\left( \frac{\partial\ln (1 - \beta_{o}(s^{\prime}, \vartheta) + \pi_{\mathcal O}(s^{\prime}, o)\beta_{o}(s^{\prime}, \vartheta))}{\partial \vartheta} \right)^T \tilde{\varphi} = \\ 
    &\sum_{s^{\prime}, o} \mu_{\mathcal O}(s^{\prime}, o) a'_{\mathcal O}(s^{\prime}, o) \frac{\partial  \ln{\beta_{o}(s^{\prime}, \vartheta)}}{\partial \vartheta}.
\end{align*}
\normalsize
The second line above follows from \eqref{eq:lr_beta_prime_rel}. To proceed further we observe from:
\begin{equation}
    \label{eq:onarrival_2}
    u(o, s^{\prime}) = (1 - \beta_{o}(s^{\prime}))q_{\pi_{\mathcal O}}(s^{\prime}, o) + \beta_{o}(s^{\prime})v_{\pi_{\mathcal O}}(s^{\prime}).
\end{equation}
we have $a'_{\mathcal O}(s^{\prime}, o) = u(o, s^{\prime}) - q_{\mathcal O}(s^{\prime}, o) = -\beta_{o}(s^{\prime}) a_{\mathcal O}(s^{\prime}, o)$. Combining this with the termination gradient theorem \citep{DBLP:journals/corr/BaconHP16} we obtain:
\begin{align*}
    &\sum_{s^{\prime}, o} \mu_{\mathcal O}(s^{\prime}, o) \frac{ \partial \ln{\beta_{o}(s^{\prime}, \vartheta)}} {\partial \vartheta} \\
    & \left( \frac{\partial\ln (1 - \beta_{o}(s^{\prime}, \vartheta)
    + \pi_{\mathcal O}(s^{\prime}, o)\beta_{o}(s^{\prime}, \vartheta))}{\partial \vartheta} \right)^T \tilde{\varphi} = \frac{\partial u(o_0, s_1)}{\partial \vartheta}.
\end{align*}
Using Theorem 2, the natural gradient of the expected discounted return is:
\[\widetilde{\nabla}_{\vartheta} u(o_0, s_1) = G_{\vartheta}^{-1} \frac{\partial u(o_0, s_1)}{\partial \vartheta} = - \tilde{\varphi}.\]
\subsection*{Proof of Lemma 1}
\begin{lemma*}
Given intra-option policies, $\pi_{o}$ for all $o \in \mathcal O$, policy over options, $\pi_{\mathcal O}$, and terminations, $\beta_{o}$ for all $o \in \mathcal O$, then: 
\[\mathbb E[\delta^U_t | s_t, a_t, o_t, \pi_{o_t}, \pi_{\mathcal O}, \beta_{o_t}] = a_{U}(s_t, o_t, a_t).\]
\end{lemma*}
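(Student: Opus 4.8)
The plan is to take the conditional expectation of the TD-error term by term and reduce it to a one-step Bellman expansion of $q_U$. Writing $\delta^U_t = r_t + \gamma \hat u(o_t, s_{t+1}) - \hat q_{\pi_{\mathcal O}}(s_t, o_t)$ and using linearity of expectation, I would handle the three summands separately. The first, $\mathbb E[r_t \mid s_t, a_t, o_t]$, is by definition $R(s_t, a_t)$. The third, $\mathbb E[\hat q_{\pi_{\mathcal O}}(s_t, o_t) \mid s_t, o_t, \pi_{o_t}, \pi_{\mathcal O}, \beta_{o_t}]$, equals $q_{\pi_{\mathcal O}}(s_t, o_t)$ directly by the consistency assumption on $\hat q_{\pi_{\mathcal O}}$ stated just before the lemma.

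For the middle term I would use the tower property, conditioning first on $s_{t+1}$: since $s_{t+1}$ is drawn from $P(s_t, a_t, \cdot)$ given $(s_t, a_t)$, we get $\mathbb E[\hat u(o_t, s_{t+1}) \mid s_t, a_t, o_t] = \sum_{s'} P(s_t, a_t, s')\, \mathbb E[\hat u(o_t, s') \mid o_t, s', \pi_{o_t}, \pi_{\mathcal O}, \beta_{o_t}]$, and the inner expectation is $u(o_t, s')$ by the consistency assumption on $\hat u$. Hence the middle term equals $\gamma \sum_{s'} P(s_t, a_t, s')\, u(o_t, s')$.

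It then remains to identify $R(s_t, a_t) + \gamma \sum_{s'} P(s_t, a_t, s')\, u(o_t, s')$ with $q_U(s_t, o_t, a_t)$. This is a one-step Bellman identity that follows directly from the definitions: starting from $(S_0, O_0, A_0) = (s, o, a)$, the expected immediate reward is $R(s,a)$, the next state is $s'$ with probability $P(s,a,s')$, and $u(o, s') = \mathbb E[\sum_t \gamma^t R_t \mid S_1 = s', O_0 = o]$ is exactly the discounted continuation value with option $o$ active upon arrival in $s'$; summing gives $q_U(s,o,a) = R(s,a) + \gamma \sum_{s'} P(s,a,s')\, u(o,s')$. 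Combining the three pieces yields $\mathbb E[\delta^U_t \mid s_t, a_t, o_t, \pi_{o_t}, \pi_{\mathcal O}, \beta_{o_t}] = q_U(s_t, o_t, a_t) - q_{\pi_{\mathcal O}}(s_t, o_t) = a_U(s_t, o_t, a_t)$.

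The main obstacle I anticipate is the bookkeeping with the conditioning: the consistency assumptions for $\hat u$ and $\hat q_{\pi_{\mathcal O}}$ are phrased relative to conditioning information that differs from what appears in the lemma, so I would need to argue carefully, using the tower property together with the Markov structure of the state-option-action process, that after conditioning on $s_{t+1}$ the estimate $\hat u(o_t, s_{t+1})$ depends on the history only through $(o_t, s_{t+1})$ and the fixed $\pi_{\mathcal O}, \pi_{o_t}, \beta_{o_t}$, so that the consistency hypothesis applies. Everything else is a routine unrolling of the definitions of $q_U$ and $u$.
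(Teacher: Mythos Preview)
Your proposal is correct and mirrors the paper's proof essentially step for step: split $\delta^U_t$ into its three summands, invoke the consistency of $\hat q_{\pi_{\mathcal O}}$ directly, handle $\hat u(o_t,S_{t+1})$ via the tower property conditioning on $S_{t+1}$ to apply the consistency of $\hat u$, and then recognize $R(s_t,a_t)+\gamma\sum_{s'}P(s_t,a_t,s')\,u(o_t,s')$ as the one-step expansion of $q_U(s_t,o_t,a_t)$. The conditioning bookkeeping you flag is exactly what the paper does, so there is no gap.
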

\begin{proof}
For time step $t$ we note that:
\begin{align}
    \mathbb E[\delta^U_t & | s_t, o_t, a_t, \pi_{o_t}, \pi_{\mathcal O}, \beta_{o_t}] \\
    =& \mathbb E [r_t + \gamma \hat{u}(o_t, S_{t+1}) - \hat{q}_{\pi_{\mathcal O}}(s_{t}, o_t) | s_t, o_t, a_t, \pi_{o_t}, \pi_{\mathcal O}, \beta_{o_t}] \\
    =& \sum_{s} \Pr(S_{t+1} {=} s| s_t, o_t, a_t) R(s_t, a_t, s) \\ 
    &+ \gamma \mathbb E[\hat{u}(o_t, S_{t+1}) | s_t, o_t,  a_t, \pi_{o_t}, \pi_{\mathcal O}, \beta_{o_t}] - q_{\pi_{\mathcal O}}(s_t, o_t) \label{eq:lemma1}.
\end{align}
Also,
\begin{align*}
    \mathbb E[&\hat{u}(o_t, S_{t+1}) | s_t, o_t, a_t, \pi_{o_t}, \pi_{\mathcal O}, \beta_{o_t}] \\
    =& \mathbb E \left [ \mathbb E[\hat{u}(o_t, S_{t+1}) | S_{t+1}, \pi_{\mathcal O}, o_t, \beta_{o_{t}}] | s_t, o_t, a_t, \pi_{o_t}, \pi_{\mathcal O}, \beta_{o_t} \right] \\
    =& \mathbb E[u(o_t, S_{t+1}) | s_t, o_t, a_t, \pi_{o_t}, \pi_{\mathcal O}, \beta_{o_t}] \\
    =& \sum_{s} \Pr(S_{t+1} {=} s |  s_t, o_t, a_t) u(s, o_t).
\end{align*}
Combining this expression with \eqref{eq:lemma1} we obtain:
\begin{align*}
    \mathbb E[\delta^U_t & | s_t, o_t, \pi_{o_t}, \pi_{\mathcal O}, \beta_{o_t}] \\ 
    =& \sum_{s} \Pr(S_{t+1} {=} s |  s_t, o_t, a_t) \Big (R(s_t, a, s) + \gamma u(o_t, s) \Big ) \\
    & -  q_{\mathcal O}(s_t, o_t) \\
    =& q_U(s_t, o_t, a_t) -  q_{\mathcal O}(s_t, o_t) = a_{U}(s_t, o_t, a_t).
\end{align*}
\end{proof}
\subsection*{Proof Of Lemma 2}
\begin{lemma*}
Under the precondition $o_t = o_{t - 1}$ and given intra-option policies, $\pi_{o}$ for all $o \in \mathcal O$, policy over options, $\pi_{\mathcal O}$, and terminations, $\beta_{o}$ for all $o \in \mathcal O$, then: \label{lemma:2_appendix}
\[\mathbb E[\delta^{\mathcal O}_t | s_{t}, o_t, o_{t} {=} o_{t-1}, \pi_{o_t}, \pi_{\mathcal O}] = a_{\mathcal O}(s_{t}, o_{t - 1}).\]
\end{lemma*}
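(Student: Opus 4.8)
The plan is to mirror the proof of Lemma~1. I would expand the definition $\delta^{\mathcal O}_t = r_t + \gamma \hat{v}_{\pi_{\mathcal O}}(s_{t+1}) - \hat{v}_{\pi_{\mathcal O}}(s_t)$, use linearity of the conditional expectation to split it into a reward term, a bootstrap term, and a baseline term, then replace each consistent estimate by its true value function via the tower rule, and finally collapse what remains with a one-step Bellman identity for $q_{\pi_{\mathcal O}}$.

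Concretely, I would handle the baseline term first: since we condition directly on $s_t$, the stated consistency property of $\hat{v}_{\pi_{\mathcal O}}$ gives $\mathbb{E}[\hat{v}_{\pi_{\mathcal O}}(s_t) \mid s_t, \dots] = v_{\pi_{\mathcal O}}(s_t)$. For the remaining two terms, because we condition on $o_t$ the action $a_t$ is drawn from $\pi_{o_t}(s_t,\cdot)$ and the successor from $P$; applying the tower rule over $(a_t, s_{t+1})$ together with the consistency property, exactly as in Lemma~1, rewrites $\mathbb{E}[r_t + \gamma \hat{v}_{\pi_{\mathcal O}}(s_{t+1}) \mid s_t, o_t, \dots]$ as $\sum_a \pi_{o_t}(s_t, a)\big(R(s_t,a) + \gamma \sum_{s'} P(s_t,a,s')\, (\text{value of arriving in } s' \text{ under } o_t)\big)$. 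The precondition $o_t = o_{t-1}$ plays no role in the forward dynamics --- once $(s_t, o_t)$ is fixed the law of $(a_t, r_t, s_{t+1})$ is pinned down --- it only lets us identify $o_t$ with $o_{t-1}$ so that the final answer can be written as $a_{\mathcal O}(s_t, o_{t-1})$.

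The step I expect to be the main obstacle is making the bootstrap term line up with the option-value Bellman backup. The Bellman equation for $q_{\pi_{\mathcal O}}$ backs up through the \emph{upon-arrival} value $u(o_t, s_{t+1})$ rather than through $v_{\pi_{\mathcal O}}(s_{t+1})$, so I would invoke \eqref{eq:onarrival} to write $u(o_t, s_{t+1}) = (1 - \beta_{o_t}(s_{t+1})) q_{\pi_{\mathcal O}}(s_{t+1}, o_t) + \beta_{o_t}(s_{t+1}) v_{\pi_{\mathcal O}}(s_{t+1})$ and take the ``value of arriving in $s'$'' above to be exactly this quantity --- equivalently, one uses the consistent estimate $\hat{u}(o_t, s_{t+1})$ from Lemma~1 in place of the raw $\hat{v}_{\pi_{\mathcal O}}(s_{t+1})$ term. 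With that identification, $\sum_a \pi_{o_t}(s_t,a)\big(R(s_t,a) + \gamma \sum_{s'} P(s_t,a,s') u(o_t,s')\big) = q_{\pi_{\mathcal O}}(s_t, o_t)$ by the one-step form of the option-value function, and subtracting the baseline yields $q_{\pi_{\mathcal O}}(s_t, o_t) - v_{\pi_{\mathcal O}}(s_t) = a_{\mathcal O}(s_t, o_t) = a_{\mathcal O}(s_t, o_{t-1})$, which is the claim; everything else is routine algebra.
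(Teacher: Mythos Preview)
Your outline mirrors the paper's argument closely: linearity of expectation, consistency to replace $\hat v_{\pi_{\mathcal O}}(s_t)$ by $v_{\pi_{\mathcal O}}(s_t)$, the tower rule on the bootstrap term, and a final Bellman collapse to $q_{\pi_{\mathcal O}}(s_t,o_t)-v_{\pi_{\mathcal O}}(s_t)=a_{\mathcal O}(s_t,o_{t-1})$. The role you assign to the precondition $o_t=o_{t-1}$ is also exactly how the paper uses it---only at the very last line, to rename $o_t$ as $o_{t-1}$.

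The step that does not go through is the one you yourself flag as the ``main obstacle''. After consistency and the tower rule you are left with $\gamma\,\mathbb E[v_{\pi_{\mathcal O}}(S_{t+1})\mid s_t,o_t,\dots]$, whereas the one-step identity for $q_{\pi_{\mathcal O}}$ bootstraps through $u(o_t,\cdot)$. Your proposed fix---``take the value of arriving in $s'$ to be $u(o_t,s')$'', or ``use $\hat u$ in place of $\hat v_{\pi_{\mathcal O}}$''---is not a derivation: $\delta^{\mathcal O}_t$ is \emph{defined} with $\hat v_{\pi_{\mathcal O}}$, and $v_{\pi_{\mathcal O}}(s')-u(o_t,s')=-(1-\beta_{o_t}(s'))\,a_{\mathcal O}(s',o_t)$ is in general nonzero, so invoking \eqref{eq:onarrival} does not license that substitution. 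The paper does not argue by substitution here; instead it expands the bootstrap expectation \emph{jointly} over $(a_t,S_{t+1},O_{t+1})$, writing $\mathbb E[v_{\pi_{\mathcal O}}(S_{t+1})\mid s_t,o_t]=\sum_{s,o,a}\Pr(S_{t+1}{=}s,O_{t+1}{=}o,a_t{=}a\mid s_t,o_t)\,q_{\pi_{\mathcal O}}(s,o)$ and then recognises the resulting reward-plus-$\gamma q_{\pi_{\mathcal O}}$ sum as $q_{\pi_{\mathcal O}}(s_t,o_t)$. That joint expansion over the next option $O_{t+1}$ is the device the paper uses to bridge $v_{\pi_{\mathcal O}}(S_{t+1})$ and the option-value backup; your write-up should carry out that step explicitly rather than asserting that $\hat v_{\pi_{\mathcal O}}$ and $\hat u$ are interchangeable.
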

\begin{proof}
For time step $t$ we note that:
\small
\begin{align}
    \mathbb E[\delta_t & |  s_{t}, o_t, o_{t} {=} o_{t-1}, \pi_{o_t}, \pi_{\mathcal O}] = \mathbb E[r_t + \gamma \hat{v}_{\pi_{\mathcal O}}(S_{t+1}) - \hat{v}_{\pi_{\mathcal O}}(s_t)]\\
    =& \sum_{s, o, a} \Pr(S_{t+1} {=} s, O_{t+1} {=} o, a_t = a | s_t, o_t) R(s_t, a, s) \\
    &+ \mathbb E[\hat{v}_{\pi_{\mathcal O}}(S_{t+1}) |  s_{t}, o_t, o_{t} {=} O_{t-1}, \pi_{o_t}, \pi_{\mathcal O}] - v_{\pi_{\mathcal O}}(s_t) \label{eq:lemma2}.
\end{align}
\normalsize
Also,
\small
\begin{align*}
    & \mathbb E[\hat{v}_{\pi_{\mathcal O}}(S_{t+1}) |  s_{t},  o, o_{t} {=} O_{t-1}, \pi_{p_t}, \pi_{\mathcal O}] \\
    =& \mathbb E[\mathbb E[\hat{v}_{\pi_{\mathcal O}}(S_{t+1}) | S_{t+1}, o_t, o_{t} {=} O_{t-1}, \pi_{o_t}, \pi_{\mathcal O}] \\
    &| s_{t}, o_t, o_{t} {=} O_{t-1}, \pi_{o_t}, \pi_{\mathcal O}] \\
    =& \mathbb E[v(S_{t+1}) | s_{t}, o_t, o_{t} {=} O_{t-1}, \pi_{o_t}, \pi_{\mathcal O}]\\
    =& \sum_{s, o, a} \Pr(S_{t+1} {=} s, O_{t+1} {=} o, a_t = a | s_t, o_t) q_{\pi_{\mathcal O}}(s, o).
\end{align*}
\normalsize
Combining this result with \eqref{eq:lemma2} we obtain:
\begin{align*}
    &\mathbb E[\delta_t |  s_{t}, o_t, o_{t} {=} O_{t-1}, \pi_{o_t}, \pi_{\mathcal O}] \\
    =& \sum_{s, o, a}  \Pr(S_{t+1} {=} s, O_{t+1} {=} o, a_t = a | s_t, o_t)\\
    & \left (R(s_t, a, s) + q_{\pi_{\mathcal O}}(s, o) \right) -  v_{_{\pi_{\mathcal O}}}(s_t)\\
    =& q_{_{\pi_{\mathcal O}}}(s_t, o_t) -  v_{_{\pi_{\mathcal O}}}(s_t) = q_{_{\pi_{\mathcal O}}}(s_t, o_{t - 1}) - v_{_{\pi_{\mathcal O}}}(s_t)\\
    =& a_{\mathcal O}(s_t, o_{t-1}).
\end{align*}
Notice that the penultimate equality follows from our assumption that $o_{t-1}$ is the same as $o_{t}$.
\end{proof}
\subsection*{Learning The Parameters $\eta$ And $\varphi$}
The partial derivative $\partial \epsilon(\eta, \theta)/\partial \eta$ upon observing state-action tuple $(s,a)$ when option $o$ is active:
\begin{align*}
    \frac{\partial \epsilon(\eta, \theta)}{\partial \eta} =& \frac{\ln \pi_{o}(s, a, \theta)}{\partial \theta} \frac{\ln \pi_{o}(s, a, \theta)}{\partial \theta}^T \eta\\
    &- a_U(s, o, a) \frac{\partial \ln \pi_{o}(s, a, \theta)}{\partial \theta}.
\end{align*}
The partial derivative $\partial \epsilon(\varphi, \vartheta)/ \partial \varphi$ upon entering the state $s'$ when option $o$ is active:
\begin{align*}
    \frac{\partial \epsilon(\varphi, \vartheta)}{\partial \varphi} =& - \frac{\partial \ln \beta_{o}(s', \vartheta)}{\partial \vartheta} \frac{\partial \ln \beta'_{o}(s', \vartheta)}{\partial \vartheta}^T \varphi \\
    &- \beta_{o}(s', \vartheta) a_{\mathcal O}(s', o) \frac{\partial \ln \beta_{o}(s')}{\partial \vartheta}.
\end{align*}
Note that the pair $s', o$ is shifted by one time step with $s'$ being one time step ahead of $o$. In the algorithm (INOC) we take steps in direction of the derivatives, provided above, to reduce the mean squared errors $\epsilon(\eta, \theta)$ and $\epsilon(\varphi, \vartheta)$. We use consistent estimates of the advantage functions $a_U$ and $a_{\mathcal O}$. We also maintain eligibility traces of the second term in the expressions above (\citealp{morimuranatgrad}; \citealp{Thomas2014BiasIN}).
\subsection*{Termination Gradient Updates}
\begin{figure*}
    \begin{subfigure}{.33\textwidth}
      \centering
      \includegraphics[width=.95\linewidth]{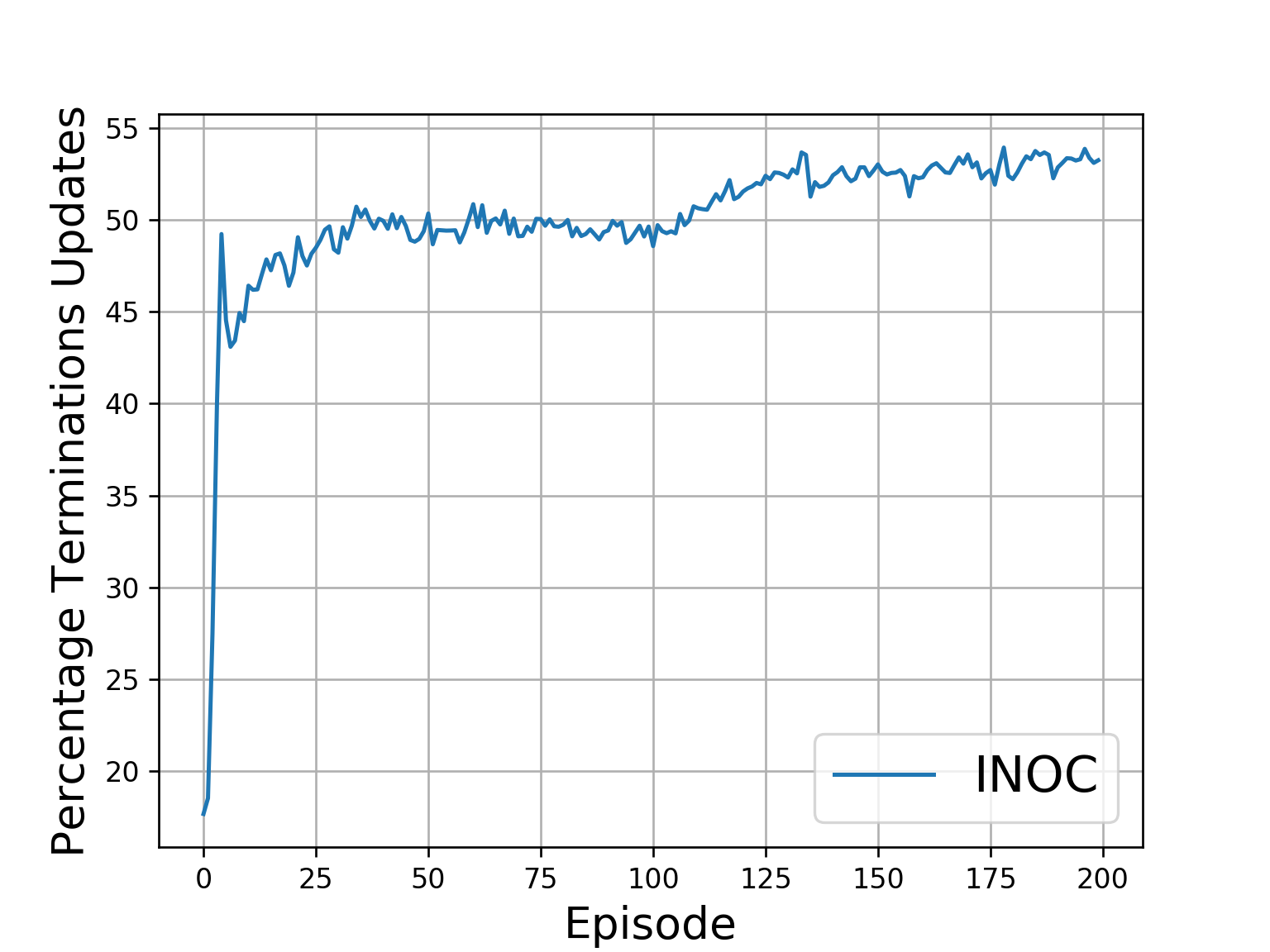}
      \caption{Asterisk}
      \label{fig:sfig1_aster}
    \end{subfigure}%
    \begin{subfigure}{.33\textwidth}
      \centering
      \includegraphics[width=.95\linewidth]{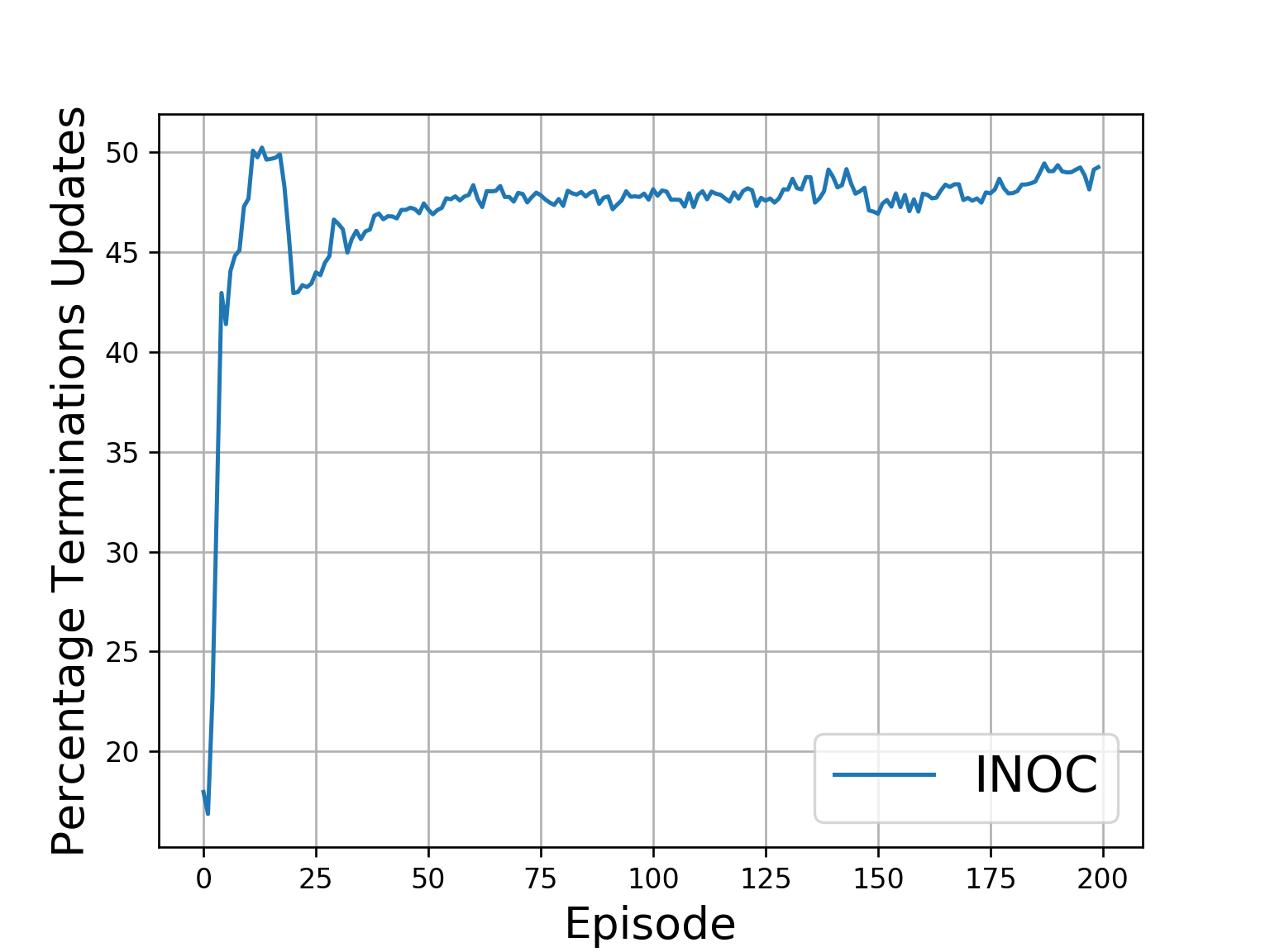}
      \caption{Seaquest}
      \label{fig:sfig2_sea}
    \end{subfigure}
    \begin{subfigure}{.33\textwidth}
      \centering
      \includegraphics[width=.95\linewidth]{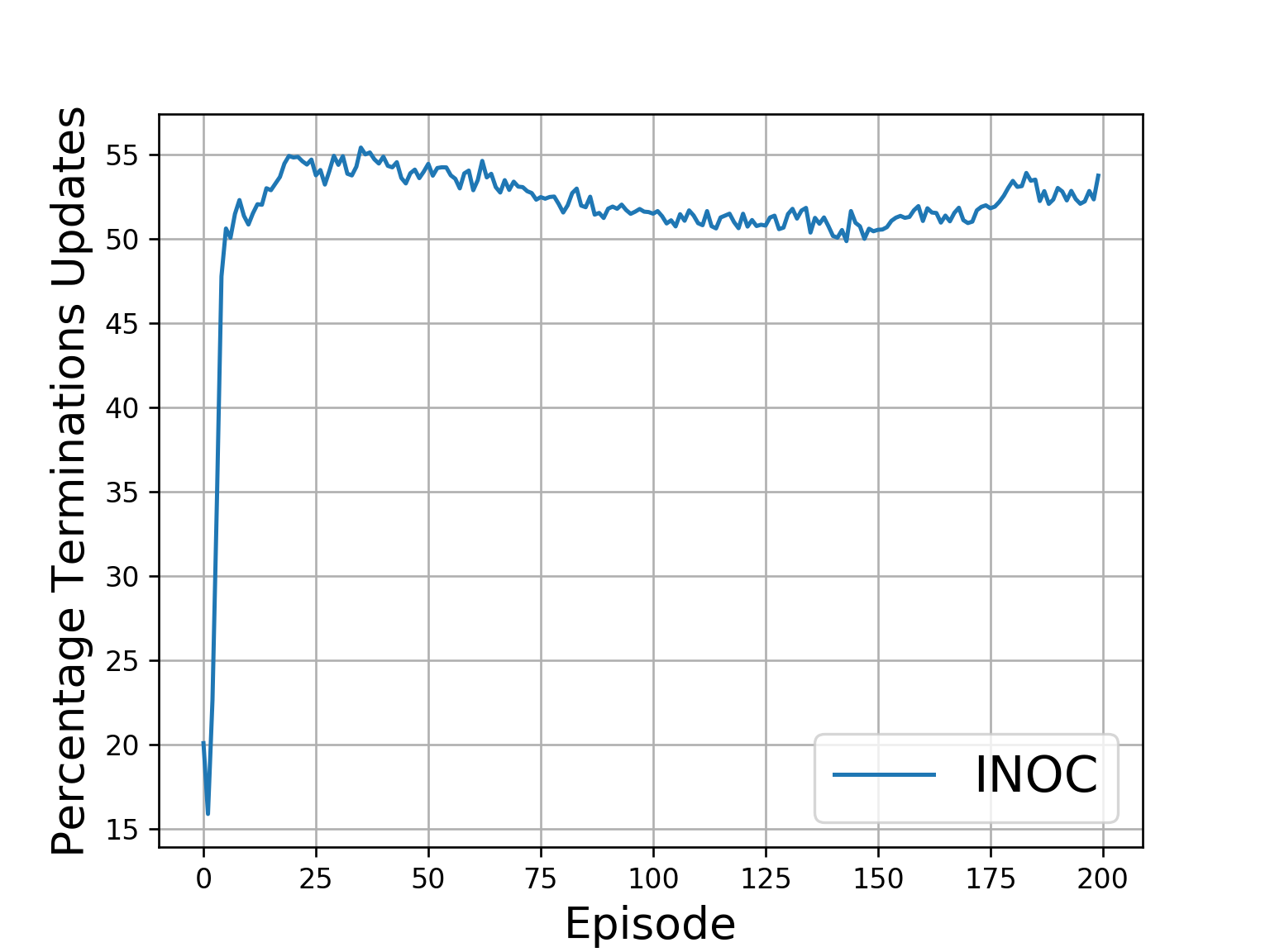}
      \caption{Zaxxon}
      \label{fig:sfig3_zax}
    \end{subfigure}
    \caption{The percentage of steps when an update is made to the termination policy for the Arcade Learning Environment, for a single trial}
    \label{fig:graphs_term_updates}
\end{figure*}
As noted in the section introducing the algorithm, the condition to get a consistent estimate of the advantage function, $a_{\mathcal O}$, which is that the previous option is the same as the current one, can limit the number of updates, this is evident from Figure \ref{fig:graphs_term_updates}. For Seaquest we note that the termination updates are sparser than the other three games. This suggests why the performance of option critic surpasses that of INOC for Seaquest.

\begin{figure*}
    \centering
    \includegraphics[width=.6\linewidth]{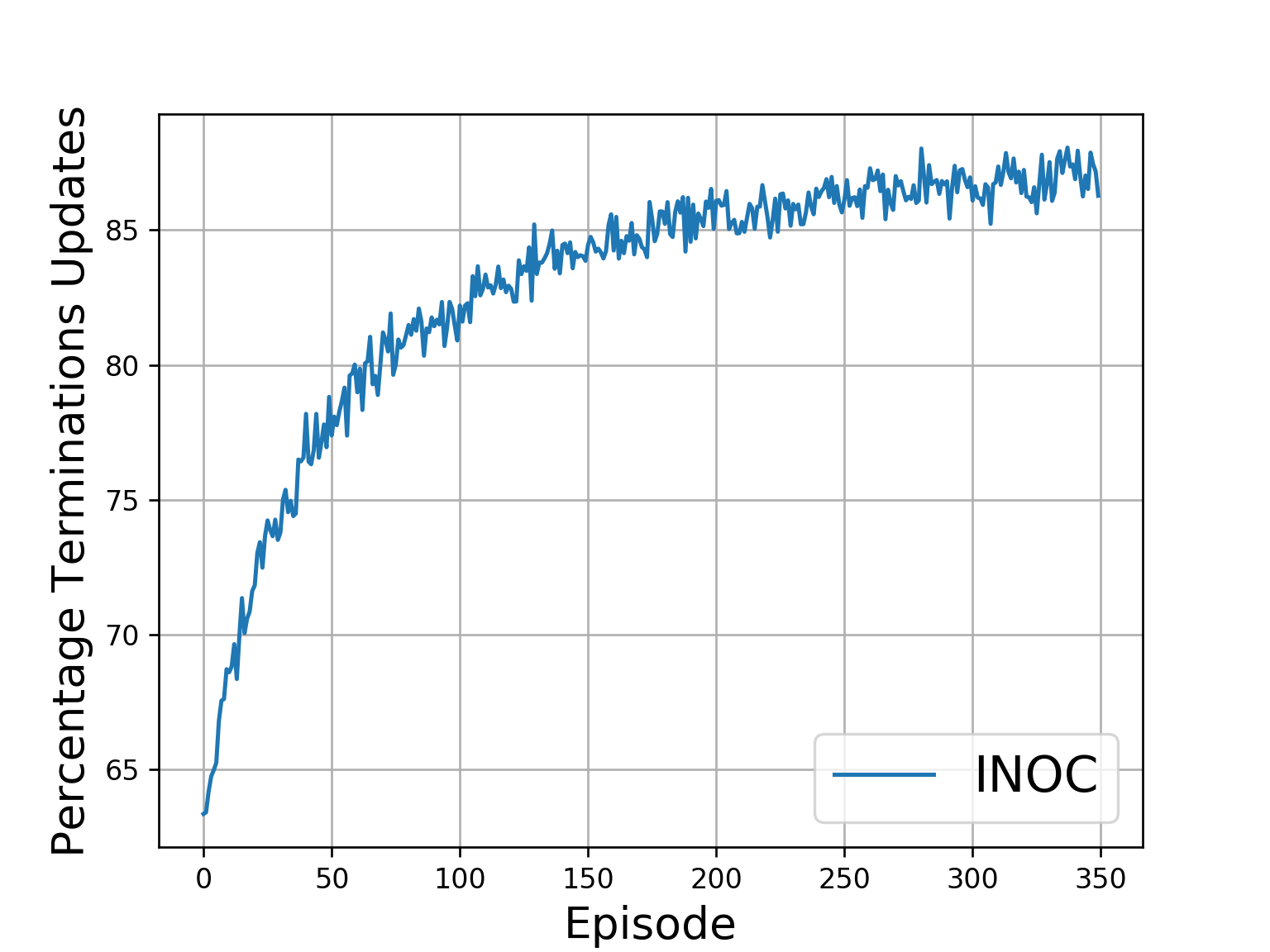}
    \caption{Percentage of steps where an update is made to the terminations in Four Rooms, averaged over 350 runs}
    \label{fig:four_rooms_term_update_ratio}
\end{figure*}

\end{document}